
\typeout{IJCAI--ECAI 26 Instructions for Authors}


\documentclass{article}
\pdfpagewidth=8.5in
\pdfpageheight=11in

\usepackage{ijcai26}

\usepackage{times}
\usepackage{soul}
\usepackage{url}
\usepackage[hidelinks]{hyperref}
\usepackage[utf8]{inputenc}
\usepackage[small]{caption}
\usepackage{graphicx}
\usepackage{amsmath}
\usepackage{multirow}
\usepackage{amsthm}
\usepackage{booktabs}
\usepackage{subcaption}
\usepackage{algorithm}
\usepackage{algorithmic}
\usepackage[switch]{lineno}

\usepackage{amssymb}
\usepackage{mathtools}

\usepackage[capitalize,noabbrev]{cleveref}

\usepackage{listings}
\usepackage{xcolor}

\definecolor{samplerbg}{RGB}{240, 248, 255}   
\definecolor{surrogatebg}{RGB}{240, 255, 240} 
\definecolor{contextbg}{RGB}{255, 250, 240}   
\definecolor{framecolor}{RGB}{100, 100, 100}

\lstdefinestyle{promptstyle}{
    basicstyle=\ttfamily\scriptsize,
    breaklines=true,
    frame=single,
    rulecolor=\color{framecolor},
    framesep=5pt,
    aboveskip=10pt,
    belowskip=10pt,
    captionpos=b,
    keepspaces=true,
    showstringspaces=false
}

\lstdefinestyle{sampler}{
    style=promptstyle,
    backgroundcolor=\color{samplerbg},
    title={\bfseries \color{blue!40!black} Candidate Sampler Prompt}
}

\lstdefinestyle{surrogate}{
    style=promptstyle,
    backgroundcolor=\color{surrogatebg},
    title={\bfseries \color{green!40!black} Surrogate Model Prompt}
}

\lstdefinestyle{context}{
    style=promptstyle,
    backgroundcolor=\color{contextbg},
    title={\bfseries \color{orange!40!black} Domain Context Injection}
}


\urlstyle{same}


\theoremstyle{plain}
\newtheorem{theorem}{Theorem}[section]

\newtheorem{lemma}[theorem]{Lemma}
\newtheorem{corollary}[theorem]{Corollary}
\theoremstyle{definition}
\newtheorem{definition}[theorem]{Definition}
\newtheorem{assumption}[theorem]{Assumption}
\theoremstyle{remark}

\newcommand{\R}{\mathbb{R}}
\newcommand{\X}{\mathcal{X}}
\newcommand{\Xstar}{\mathcal{X}^*}





\pdfinfo{
/TemplateVersion (IJCAI.2026.0)
}

\title{Multi-Objective Hierarchical Optimization with Large Language Models}

\author{
Andrej Schwanke$^1$
\and
Lyubomir Ivanov$^1$\and
David Salinas$^2$\and
Frank Hutter$^{3,2,1}$\And
Arber Zela$^{2}$\\
\affiliations
$^1$University of Freiburg, Germany\\
$^2$ELLIS Institute Tübingen, Germany\\
$^3$Prior Labs\\
\emails
andrejschwanke19@gmail.com \quad\quad arber.zela@tue.ellis.eu
}

\begin{document}

\maketitle

\begin{abstract}
Despite their widespread adoption in various domains, especially due to their powerful reasoning capabilities, Large Language Models (LLMs) are not the off-the-shelf choice to drive multi-objective optimization yet. Conventional strategies rank high in benchmarks due to their intrinsic capabilities to handle numerical inputs and careful modelling choices that balance exploration and Pareto-front exploitation, as well as handle multiple (conflicting) objectives. In this paper, we close this gap by leveraging LLMs as surrogate models and candidate samplers inside a structured hierarchical search strategy. By adaptively partitioning the input space into disjoint hyperrectangular regions and ranking them with a composite score function, we restrict the generative process of the LLM to specific, high-potential sub-spaces, hence making the problem easier to solve as the LLM doesn't have to reason about the global structure of the problem, but only locally instead. We show that under standard regularity assumptions, our algorithm generates candidate solutions that converge to the true Pareto set in Hausdorff distance. Empirically, it consistently outperforms the global LLM-based multi-objective optimizer and is on par with standard evolutionary and Bayesian optimization algorithm on synthetic and real-world benchmarks.
\end{abstract}

\section{Introduction}\label{sec:intro}

Optimizing multiple unknown functions with potentially conflicting objectives remains a challenging yet important problem arising in many areas~\cite{chinchuluun_survey,coello_moo_survey}, including machine learning~\cite{sener18}, engineering~\cite{Marler2004SurveyOM}, material science~\cite{gopakumar18} and robotics~\cite{tesch13}. For such multi-objective optimization (MOO) problems, finding a single solution that optimizes all objectives simultaneously is typically impossible, hence, we are interested in a set of Pareto optimal solutions~\cite{deb2001multi}.

In practice, for many MOO problems, we do not have access to gradient information and at least one of the objectives is expensive to evaluate~\cite{jones98}. For example, synthesizing molecules often requires costly wet-lab experiments or computationally expensive simulations, requiring days or potentially weeks for obtaining a single function evaluation~\cite{shields18}. Classical approaches such as Multi-Objective Evolutionary Algorithms (MOEAs)~\cite{deb2001multi,deb2002fast,deb2014evolutionary} and Bayesian Optimization (MOBO)~\cite{daulton2021parallel,daulton22} are standard and effective choices. However, they typically require assumptions on function properties and lack mechanisms to directly integrate domain knowledge into the optimization process.

\begin{figure}
    \centering
    \includegraphics[width=0.99\linewidth, height=0.21\textheight]{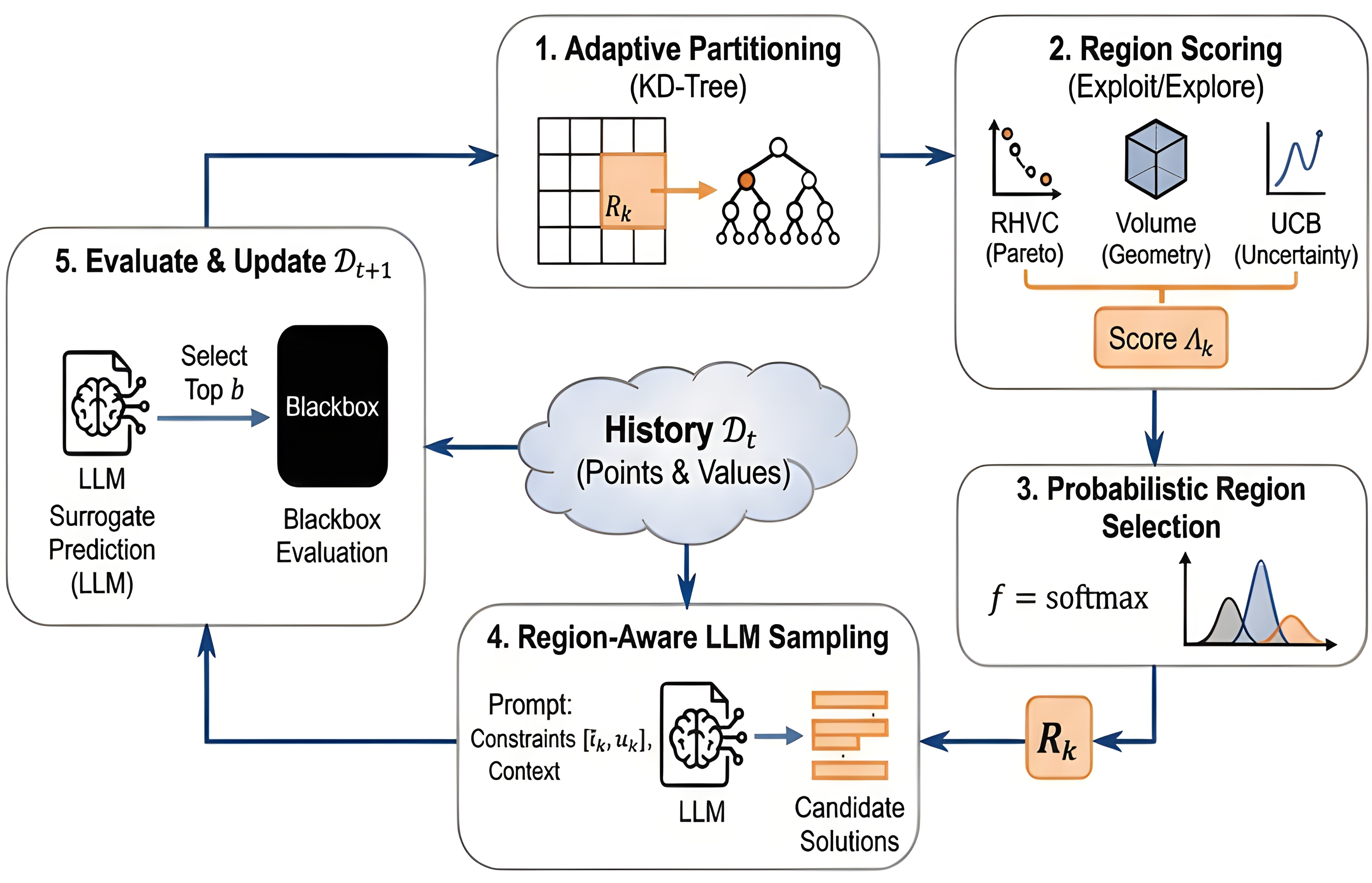}
    \caption{Illustration of MOHOLLM's optimization pipeline.}
    \label{fig:pipeline}
    \vspace{-2mm}
\end{figure}

In the past few years, Large Language Models (LLMs) \cite{touvron2023,openai2023gpt4} and their in-context learning (ICL) capabilities ~\cite{brown2020language,krishnamurthy24} have created substantial advancements across various fields. More recently, we have seen an adoption of LLMs in optimization too~\cite{liu2024large,agarwal2025searching,zhang2023using,yang2024large}, where LLMs are used to guide the search by generating candidate solutions by encoding task information and specifications inside prompts. 
However, recent findings suggest that such LLM-based optimizers may exhibit high sampling bias and sensitivity to prompt design \cite{liu2024large,schwanke2025improving}. Moreover, their extension to MOO involves additional complexities arising from the necessity to balance the multiple objectives.

In this paper, we introduce \textbf{MOHOLLM} (\textbf{M}ulti-\textbf{O}bjective \textbf{H}ierarchical \textbf{O}ptimization with \textbf{LLM}s), a framework that mitigates these issues through adaptive search space partitioning and region-aware prompting. Similarly to the recent work on LLM-based single-objective optimization~\cite{schwanke2025improving}, our method decomposes the search space into smaller sub-regions, balances exploration–exploitation through a composite scoring function, and prompts the LLM to generate candidates in the selected promising sub-regions. However, differently from \cite{schwanke2025improving}, we define a novel regional Hypervolume contribution score which is used both as exploitation factor and for the uncertainty component computation. We refer to Figure~\ref{fig:pipeline} for an illustration of the pipeline.

To summarize, our key contributions are:
\begin{itemize}
\setlength{\leftmargin}{0pt}
\setlength{\itemindent}{0pt}
\item We introduce MOHOLLM, a hierarchical LLM-based MOO method that enforces structured exploration via adaptive input space partitioning and a composite utility function combining regional Hypervolume contribution, geometric, and uncertainty-driven criteria. (Section~\ref{sec:method})
\item Under standard regularity and LLM behavior assumptions, we prove that MOHOLLM ensures asymptotic coverage of the true Pareto set. (Section~\ref{sec:theory})
\item Experiments on 15 synthetic and real-world benchmarks show that MOHOLLM is superior to global LLM-based optimizers and is on par with state-of-the-art MOBO and MOEAs. (Section~\ref{sec:experiments})
\end{itemize}

Together, these contributions enable a more reliable and principled integration of LLMs into MOO frameworks. We release our code in \url{https://github.com/arberzela/mohollm.git}.

\section{Background}\label{sec:background}

\subsection{Preliminaries}\label{sec:preliminaries}

Our goal is to solve the problem of global multi-objective optimization for black-box functions where the underlying gradients are typically unavailable and function evaluations are computationally expensive. Let $\mathbf{F}: \mathcal{X} \to \mathbb{R}^M$ be the vector-valued objective function defined over a compact hyperrectangular domain $\mathcal{X} \subset \mathbb{R}^d$:
\begin{equation}
\min_{x \in \mathcal{X}} \mathbf{F}(x) = (f_1(x), f_2(x), \ldots, f_M(x))^\top.
\end{equation}
We assume that the objectives are conflicting, meaning no single solution simultaneously optimizes all functions. Therefore, we are interested in a set of non-dominated solutions: the Pareto set $\mathcal{X}^\star$. 

\begin{definition}{(\textit{Pareto Dominance})}
Given two solutions $x, x' \in \mathcal{X}$, we say that $x$ \emph{Pareto dominates} $x'$ (denoted $x \prec x'$) if:
\begin{equation*}
    \forall i \in \{1,\ldots,M\}:\ f_i(x) \le f_i(x')\ \text{and}\ \exists j:\ f_j(x) < f_j(x').
\end{equation*}
\end{definition}

\begin{definition}{(\textit{Pareto Set})}
The Pareto set (also called Pareto optimal set) is the set of all non-dominated solutions in the search space: $\mathcal{X}^\star = \{\, x \in \mathcal{X} \mid \nexists\, x' \in \mathcal{X} \text{ such that } x' \prec x \,\}$
\end{definition}
The \emph{Pareto front} is the image of the Pareto set in objective space:
$\mathcal{F}^\star = \{\, \mathbf{F}(x) \mid x \in \mathcal{X}^\star \,\}$.
Our goal is to generate a sequence of observations $\mathcal{D}_t$ such that the Pareto set approximation is close to the true Pareto set $\mathcal{X}^\star$, ensuring dense coverage of the optimal trade-off surface. To quantify solution quality, the Hypervolume (HV) indicator is typically used.

\begin{definition}[\textit{Hypervolume Indicator}]
Let $\mathcal{Y} \subset \mathbb{R}^M$ be a set of objective vectors and $\mathbf{r} \in \mathbb{R}^M$ be a reference point dominated by all $\mathbf{y} \in \mathcal{Y}$. The Hypervolume indicator $HV(\mathcal{Y})$ is defined as the Lebesgue measure $\lambda_{HV}$ of the union of hypercubes formed by each point $\mathbf{y}$ and the reference point $\mathbf{r}$:
\[
HV(\mathcal{Y}) = \lambda_{HV} \left( \bigcup_{\mathbf{y} \in \mathcal{Y}} \{ \mathbf{z} \in \mathbb{R}^M \mid \mathbf{y} \preceq \mathbf{z} \preceq \mathbf{r} \} \right).
\]
\end{definition}

\subsection{Related Work}
\label{sec:related_work}

\paragraph{Evolutionary Algorithms (EAs).} EAs are a dominant class of population-based heuristics for multi-objective optimization, typically categorized by their selection mechanism. Dominance-based methods \cite{deb2002fast,deb2014evolutionary}, rely on non-dominated sorting to guide the population toward the Pareto front. MOEA/D \cite{zhang2007moead} decomposes the problem into multiple single objective sub-problems. The aforementioned methods struggle with more than two objectives. To mitigate this, indicator-based methods (e.g., SMS-EMOA \cite{beume2007sms}) directly optimize a quality metric such as Hypervolume, offering robust performance at the cost of higher computational complexity.

\paragraph{Bayesian Optimization (BO).}
BO is a sample-efficient framework for optimizing expensive black-box functions by iteratively updating a probabilistic surrogate and maximizing an acquisition function \cite{shahriari2016taking}. In the multi-objective setting (MOBO), methods typically aim to approximate the Pareto front by maximizing indicators such as Hypervolume. While early approaches relied on scalarization \cite{knowles2006parego}, Expected Hypervolume Improvement (EHVI) \cite{emmerich11} has become the de-facto choice by directly targeting hypervolume maximization. Modern implementations like qEHVI \cite{daulton2020differentiable,daulton2021parallel} further enhance efficiency through parallelizable Monte Carlo integration.

\paragraph{LLM-based Global Optimization.}
Recent research leverages LLMs to guide global optimization by framing the search process as an iterative language generation task \cite{liu2024large,yang2024large,song24}. In particular, LLAMBO \cite{liu2024large} show that LLMs can act as effective candidate generator and surrogate model for sample-efficient single-objective optimization. However, purely LLM-based approaches often struggle with numerical reasoning in continuous domains, a limitation that \cite{schwanke2025improving} tries to address through space partitioning.


\paragraph{Search Space Partitioning.}
Search space partitioning has been shown to be beneficial especially for high-dimensional multi-objective optimization problems. Methods such as MORBO \cite{daulton22} use trust regions, while LaMOO~\cite{zhao2022multiobjective} uses adaptive tree-based partitioning based on sample quality. All such instances can be formulated using the Multi-Armed Bandit (MAB) framework, which provides a foundation for region selection, traditionally using axis-aligned partitions~\cite{munos11,bubeck11}. Our work extends these principles to LLM-based optimization, treating each partition as an arm in the MAB problem~\cite{valko13}.

\section{The MOHOLLM Algorithm}\label{sec:method}

We now introduce MOHOLLM, a method that solves blackbox multi-objective optimization problems with LLMs and search space partitioning.
By decomposing the continuous search space into discrete regions, we can apply discrete decision-making strategies to balance exploration and exploitation. The continuous optimization problem can hence be seen as a hierarchical MAB problem~\cite{valko13,grill15}. Similarly to \cite{schwanke2025improving}, our algorithm (see Algorithm~\ref{alg:mohollm} for the pseudocode) consists of five iterative steps:

\paragraph{1. Adaptive Space Partitioning.}
To partition the continuous domain $\mathcal{X}$ into regions, we employ an adaptive discretization strategy based on KD-trees~\cite{munos11,valko13}. One reason we chose axis-aligned partitioning is because of the easier interpretation of the hyperrectangular region boundaries from the LLM in step 4. At each iteration $t$, we construct the tree $\mathcal{T}_t$ recursively from the observation history $\mathcal{D}_t = \{(x_i, \mathbf{F}(x_i))\}_{i=1}^t$, where partitions are defined by splitting along the dimension of maximal variance at the median value. This median-based approach ensures a balanced tree structure, avoiding degenerate empty leaf nodes and positioning less observations on region boundaries. To prevent the partitions from becoming too fine-grained, especially in early iterations, we control the granularity of the tree via a dynamic splitting criterion: a leaf node is refined only if its sample count exceeds a time-dependent threshold $m_t = m_0 + \lfloor \lambda \log(1+t) \rfloor$. This schedule serves as a structural regularizer, constraining the tree depth in early iterations to force broad, global exploration, while still allowing the partition diameter to vanish asymptotically.

\begin{algorithm}[t]
\caption{MOHOLLM Algorithm Pseudocode}
\label{alg:mohollm}
\begin{algorithmic}[1]
\REQUIRE Initial data $\mathcal{D}_{t_0}$, budget $T$, batch size $b$, regions $k$, candidates $N$, initial leaf size $m_0$.
\STATE Initialize $t \leftarrow t_0$.
\WHILE{$t < T$}
    \STATE \textbf{// Step 1: Partitioning}
    \STATE Construct KD-tree $\mathcal{T}_t$ on $\mathcal{D}_t$
    \STATE Adjust leaf maximum size $m_t = m_0 + \lfloor \lambda \log(1+t) \rfloor$
    
    \STATE \textbf{// Step 2: Scoring}
    \FOR{each leaf $R_j$}
        \STATE Compute Exploitation: $\psi_{\text{HV}}(R_j)$ (Eq.~\ref{eq:exploit})
        \STATE Compute Exploration: \\ \quad $\psi_{\text{Vol}}(R_j)$ (Eq.~\ref{eq:volume}) \& $\psi_{\text{UCBV}}(R_j)$ (Eq.~\ref{eq:ucb})
        \STATE Compute composite score $A_j(t)$ (Eq.~\ref{eq:score})
    \ENDFOR

    \STATE \textbf{// Step 3: Selection}
    \STATE Compute probabilities of regions $\boldsymbol{\pi}(t) \propto \exp(\mathbf{A}(t))$
    \STATE $\mathcal{L}^{sampled}_{t} \leftarrow$ sample $k$ regions from $\mathtt{Cat}(\boldsymbol{\pi}(t))$
    
    \STATE \textbf{// Step 4: Sampling}
    \FOR{each $R \in \mathcal{L}^{sampled}_{t}$}
        \STATE Generate $N$ candidates $\hat{x} \sim p_{\text{LLM}}(x \mid \mathcal{C}(R_j, \mathcal{D}_t))$
    \ENDFOR
    
    \STATE \textbf{// Step 5: Evaluation}
    \STATE For all candidates predict $\hat{\mathbf{y}} \sim p_{\text{LLM}}(\mathbf{y} \mid \mathcal{C}(\mathcal{D}_t,\hat{x}))$
    \STATE Select batch $B^*$ of size $b$ maximizing predicted HV
    \STATE Evaluate true objectives: $\mathbf{Y}^{*} \leftarrow \mathbf{F}(B^*)$
    \STATE Update $\mathcal{D}_{t} \leftarrow \mathcal{D}_t \cup (B^*, \mathbf{Y}^{*})$
    \STATE $t \leftarrow t + b$
\ENDWHILE
\RETURN $\mathcal{F}_{final}$ (Pareto Front of $\mathcal{D}_T$)
\end{algorithmic}
\end{algorithm}

\begin{figure*}[ht]
    \centering
    \includegraphics[width=0.99\linewidth]{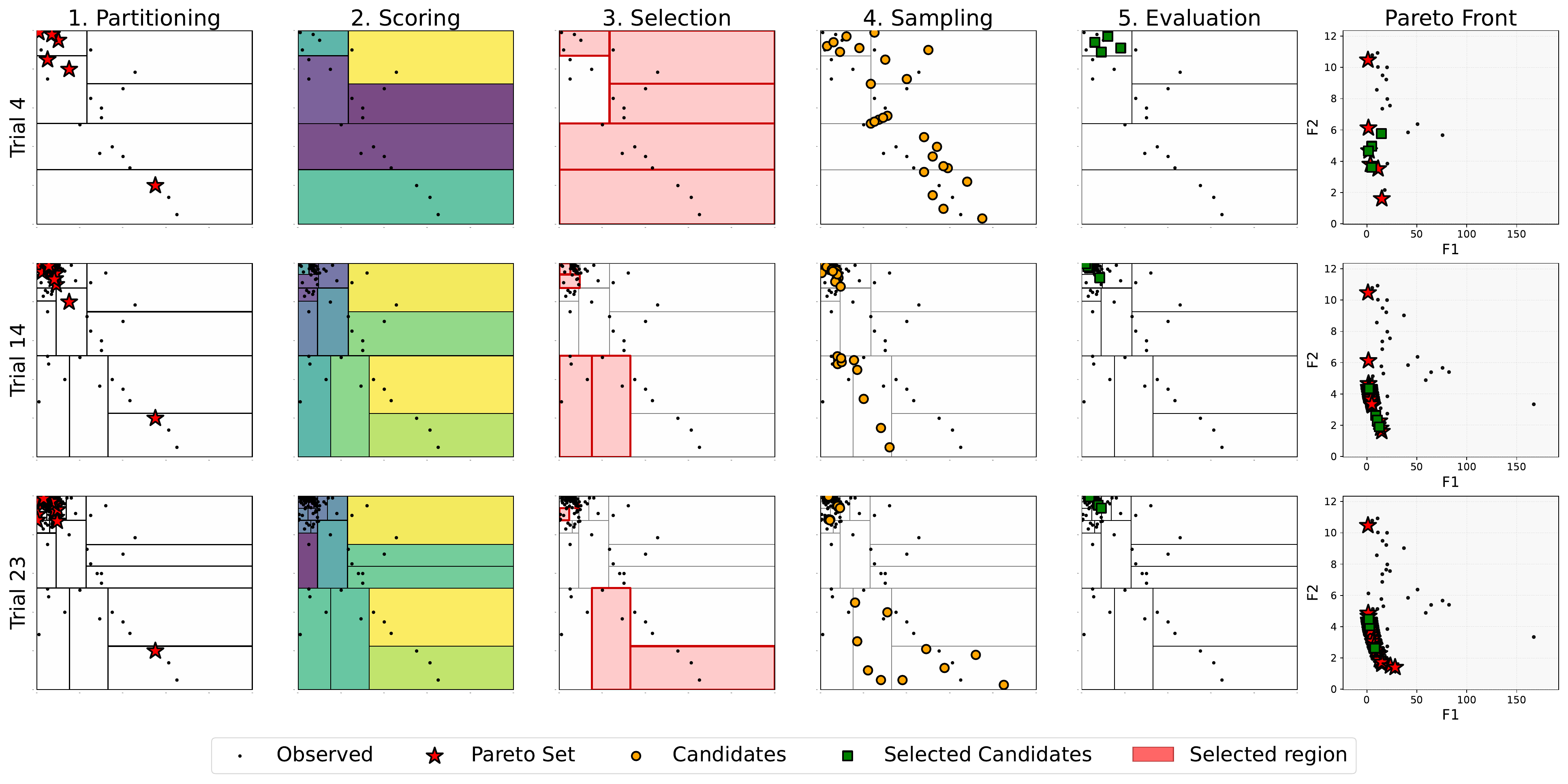}
    \caption{Search process of MOHOLLM on the 2D Branin–Currin function at different optimization stages. The last column illustrates the function values, while other columns the five algorithmic steps: partitioning, region scoring, probabilistic selection, LLM-based sampling, and evaluation. Rows correspond to early, intermediate, and late trials. MOHOLLM transitions from coarse, globally exploratory partitions to fine-grained, localized refinement around the Pareto front, allocating samples to both sparse regions and dense Pareto-optimal areas.}
    \label{fig:search_process_branin}
\end{figure*}

\paragraph{2. Region Scoring.}
Let $\mathcal{L}_t = \{R_1, \dots, R_{K_t}\}$ denote the resulting set of $K_t$ disjoint leaf nodes of the KD-tree at iteration $t$\footnote{We write $R_j$ instead of $R_{t, j}$ for brevity.}. To decide on which region to sample next with the LLM, we need to score each region with an utility function, that balances exploration and exploitation, so we do not spend unnecessary resources in uninteresting or suboptimal regions.
To this end, we define a scalar utility function $A_j(t)$ that ranks each partition $R_j \in \mathcal{L}_t$ based on its (i) contribution to the current Pareto front HV, (ii) geometric properties, and (iii) statistical properties.
\begin{description}
    \item[i.] \textit{Pareto Exploitation ($\psi_{\textnormal{HV}}$):} We measure local utility via the \textit{Regional Hypervolume Contribution}\footnote{To ensure scale invariance, all objective vectors $\mathbf{y} \in \mathcal{R}^M$ are min–max normalized to $[0,1]$ before HV computation: $\tilde{\mathbf{y}} = (\mathbf{y} - \mathbf{y}_{\min}) / (\mathbf{y}_{\max} - \mathbf{y}_{\min})$, with bounds from the current history.}, which quantifies the total contribution that the points of each region $I_j \subset R_j$ have on the current Pareto front $\mathcal{F}^*_t$:
    \begin{equation}
        \label{eq:exploit}
        \psi_{\text{HV}}(R_j) = \text{HV}(\mathcal{F}^*_t) - \text{HV}(\mathcal{F}^*_t \setminus \{\, \mathbf{F}(x) \mid x \in I_j \,\}). 
    \end{equation}
    This metric aligns the search explicitly with the multiple optimization objectives, prioritizing regions that currently sustain the non-dominated set. $\psi_{\text{HV}}(R_j) = 0$ and $\psi_{\text{HV}}(R_j) = 1$ mean that no and all points from region $R_j$ are in the current Pareto set, respectively.

    \item[ii.] \textit{Geometric Exploration ($\psi_{\textnormal{Vol}}$):} Our first exploration term involves a "void-filling" incentive based on the geometric mean of the side lengths of each leaf hyperrectangular region:
    \begin{equation}
        \label{eq:volume}
        \psi_{\text{Vol}}(R_j) = \mathrm{vol}(R_j)^{1/d},
    \end{equation}
    where $d$ is the input dimensionality. $\psi_{\text{Vol}}$ ensures that large, under-sampled subspaces retain non-negligible selection probability purely due to their size, thereby preventing the premature exclusion of unvisited regions.
    
    \item[iii.] \textit{Statistical Exploration ($\psi_{\textnormal{UCBV}}$):} To assign high importance to regions which have high objective variance, we employ a variance-based Upper Confidence Bound~\cite{audibert09} as a second exploration term:
    \begin{equation}
        \label{eq:ucb}
        \psi_{\text{UCBV}}(R_j) = \sqrt{\frac{2\varsigma_j^2 \max(0, \ln(\frac{t}{K_t \cdot |I_j|}))}{|I_j|}},
    \end{equation}
    where $K_t$ is the number of active leaves, $|I_j|$ is the number of points in region $R_j$ and $\varsigma_j^2 = Var(\{\text{HV}(\mathcal{F}^*_t) - \text{HV}(\mathcal{F}^*_t \setminus  \mathbf{F}(x) )\}_{x \in I_j})$ is the variance contribution of each point to the current Pareto front HV.
\end{description}
The final score aggregates these components via a non-linear scalarization:
\begin{align}
    \label{eq:score}
    A_j(t) = & \sigma(\psi_{\text{HV}}(R_j)) \\ & + \alpha_t \left[ \beta_1 \sigma(\psi_{\text{Vol}}(R_j)) + \beta_2 \sigma(\psi_{\text{UCBV}}(R_j)) \right], \nonumber
\end{align}
where $\beta_1 + \beta_2 = 1$. The sigmoid function $\sigma(\cdot)$ normalizes the different ranges from each component into a common $[0,1]$ interval. $\alpha_t$ is a temperature parameter that decays following a cosine annealing schedule as $t$ increases, therefore suppressing the exploration terms ($\psi_{\text{Vol}}, \psi_{\text{UCBV}}$) to enforce a more exploitative search in later iterations.

\paragraph{3. Stochastic Region Selection.}
We employ a stochastic selection policy instead of a greedy one in favor of regions with maximal score. The selection probability $\pi_j(t)$ for region $R_j$ at iteration $t$ is defined via the Softmax function:
\begin{equation}
  \pi_j(t) = \frac{\exp(A_j(t))}{\sum_{k=1}^{K_t} \exp(A_k(t))}.
\end{equation}
We denote with $\mathbf{A} = (A_1, \dots, A_{K_t})$ the vector of scores, $\boldsymbol{\pi} = (\pi_1, \dots, \pi_{K_t})$ the vector of probability masses and with $\mathtt{Cat}(\boldsymbol{\pi})$ the Categorical distribution over $\mathcal{L}_t$, where we sample $k \leq K_t$ regions without repetition.
This stochastic formulation guarantees that every active region retains a strictly positive selection probability $\pi_j > 0$, thereby infinite visit count in the limit (consistent with Assumption~\ref{ass:exploration} in Section~\ref{sec:theory}), a necessary condition for almost-sure convergence to the Pareto set, which we prove in Section~\ref{sec:theory}.

\paragraph{4. LLM-based Generative Sampling.}
After sampling a subset $\mathcal{L}^{sampled}_{t} \subset \mathcal{L}_t$ of $k$ regions from $\mathtt{Cat}(\boldsymbol{\pi})$, we deploy a LLM as a local conditional sampler to generate $N$ new candidate points inside these sampled regions:
\begin{equation}
    \hat{x} \sim p_{\text{LLM}}(x \mid \mathcal{C}(R_j, \mathcal{D}_t)),
\end{equation}
where $\mathcal{C}$ denotes the prompt construction function. A prompt instance provided to the LLM has the following simplified instructions (see Appendix~\ref{appendix:prompt_design} for the full prompt templates): \textit{"Given the history of evaluations $\mathcal{D}_t$, generate $N$ new candidate points within the region $R_j$."}.
The use of LLMs in this setting is motivated from two main reasons. First, the LLM can be seen as flexible non-parametric generators, with a very rich prior (result of pre-training), that does not require assumptions on functions as, for instance, GPs do~\cite{Rasmussen2006Gaussian}. Second, in-context learning enables quick adaptation to diverse tasks without the need of retraining~\cite{brown2020language,krishnamurthy24}.

\paragraph{5. LLM-based Surrogate Modeling and Evaluation.}
At each iteration of MOHOLLM, we evaluate the true functions on a batch $b$ of new points. However, in step 4 we generated $k\cdot N$ candidates ($N$ candidates in $k$ regions) using the LLM. Typically, $k\cdot N > b$, so we need a strategy to select the subset of $b$ points. 
To this end, similarly as in step 4, we again utilize the LLM's capabilities to act as a performance predictor (surrogate model), generating predicted objective values $\hat{\mathbf{y}} \sim p_{\text{LLM}}(\mathbf{y} \mid \mathcal{C}(\mathcal{D}_t,\hat{x}))$ for the entire pool of generated candidates. From this synthetic pool, we then select a batch $B^*$ of size $b$ that maximizes the predicted HV indicator. Only this optimized batch is evaluated on the expensive true functions. Finally, we update the history of evaluations $\mathcal{D}_t$ with the new points and repeat the above steps until we reach the total evaluation budget $T$.

\paragraph{Search Process Visualization.} In Figure~\ref{fig:search_process_branin} we illustrate MOHOLLM's search on the 2D BraninCurrin (see Figure~\ref{fig:search_process_chankong} and \ref{fig:search_process_poloni} in Appendix~\ref{secapp:experiments} for Chankong-Haimes and Poloni, respectively). Early iterations (Trial 4) show coarse partitions encouraging global exploration, while later iterations (Trial 23) show fine-grained partitions focused on promising regions. The region scoring consistently balances exploration and exploitation by allocating resources to both sparse regions and dense Pareto-front regions simultaneously.

\section{Theoretical Analysis}
\label{sec:theory}


Let $\X \subset \R^d$ be a compact hyperrectangular domain and let $\X^\star \subset \X$ denote the Pareto set induced by the vector-valued objective
$\mathbf{F} = (f_1,\dots,f_M)$. Let $\mathcal{S}_t = \{x_1,\dots,x_t\}$ be the set of points sampled by MOHOLLM up to iteration~$t$.
To quantify convergence, we adopt the (directed) Hausdorff distance: $d_H(\X^\star, \mathcal{S}_t) = \sup_{x^\star \in \X^\star} \inf_{x \in \mathcal{S}_t} \|x - x^\star\|_2 $, a set distance metric commonly used in multi-objective optimization.
Our goal is to show that, under mild and standard regularity assumptions, which we adopt from optimistic and hierarchical optimization theory \cite{munos11,bubeck11}, and assumptions regarding the stochastic selection and LLM sampling, $d_H(\X^\star, \mathcal{S}_t) \to 0$ almost surely, implying that every Pareto-optimal point is approximated arbitrarily well by MOHOLLM's sampled points.

\begin{assumption}[Lipschitz Continuity]
\label{ass:lipschitz}
Each objective $f_i : \X \to \R$ is Lipschitz continuous. Thus, $\mathbf{F}$ is continuous and maps compact sets to compact sets.
\end{assumption}
This standard condition links spatial resolution to approximation error, bounding objective changes under small decision-space perturbations.

\begin{assumption}[Hierarchical Partitioning]
\label{ass:partition}
MOHOLLM maintains a hierarchical partition of $\X$ using a KD-tree. Let $R_t(x)$ denote a leaf region containing $x$ at iteration~$t$. For any infinite sequence of nested regions,
\(
R_{1} \supset R_{2} \supset \dots ,
\)
generated along a branch of the tree, we assume
$
\lim_{t \to \infty} \mathrm{diam}\!\left(R_{t}\right) = 0 .
$
\end{assumption}
This assumption guarantees that recursive refinement can localize arbitrarily small neighborhoods. It is satisfied by standard axis-aligned splitting rules and is a cornerstone of convergence proofs for hierarchical optimistic optimization \cite{munos11,bubeck11}.

\begin{assumption}[Sufficient Exploration]
\label{ass:exploration}
For any leaf region $R_k$ created at $t_0$, the selection probabilities $\pi_k(t)$ satisfy
$
\sum_{t=t_0}^{\infty} \pi_k(t) = \infty \quad \text{almost surely}.
$
\end{assumption}
This is a non-starvation condition ensuring that every region that persists in the tree is sampled infinitely often. In MOHOLLM, region scores are bounded through a sigmoid transformation, and the resulting Softmax normalization yields $\pi_k(t) = \Omega(1/K_t)$, where $K_t$ is the number of leaf nodes. If $K_t \le t$, e.g. in a fixed tree, the series diverges by comparison with the harmonic series.

\begin{assumption}[Eventual Refinement]
\label{ass:refinement}
Any leaf region $R$ selected infinitely often is split infinitely often, producing child regions of strictly smaller diameter.
\end{assumption}
This assumption prevents stagnation at a fixed spatial resolution, mirroring refinement guarantees used in optimistic tree search~\cite{kleinberg08,bubeck11}.

\begin{assumption}[LLM Sampling]
\label{ass:llm}
Let $p_{\mathrm{LLM}}(x \mid \mathcal{C}(R_j, \cdot))$ denote the conditional density of the LLM-generated proposal given region $R_j$. For any measurable ball $B \subset R$, there exists $\eta > 0$ such that
\[
\int_B P_{\mathrm{LLM}}(x \mid \mathcal{C}(R_j, \cdot)\,dx
\;\ge\;
\eta \, \frac{\mathrm{vol}(B)}{\mathrm{vol}(R_j)} .
\]
\end{assumption}

With this assumption, we ensure that the LLM assigns non-zero probability mass to any subregion $B$ inside the selected region $R_j$, hence avoiding complete mode collapse.

The aforementioned assumptions guarantee that MOHOLLM balances global exploration with local refinement, ensuring asymptotic coverage of the Pareto set without relying on parametric surrogates. Before proving such claim, we first establish an intermediate result regarding the spatial resolution of the partition around optimal points.

\begin{lemma}[The Shrinking Lemma]
\label{lem:shrinking}
For any $x^* \in \Xstar$, the diameter of the leaf node containing $x^*$ converges to zero almost surely, i.e. $\lim_{t \to \infty} \mathrm{diam}(R_t(x^*)) = 0 \quad \text{a.s.}$
\end{lemma}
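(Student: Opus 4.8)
The plan is to prove the Shrinking Lemma by combining the three structural assumptions that control how the KD-tree evolves along the branch containing a fixed Pareto-optimal point $x^*$. First I would fix an arbitrary $x^* \in \Xstar$ and track the sequence of leaf nodes $R_t(x^*)$ that contain it. Since the partition is hierarchical (\Cref{ass:partition}) and refinement only subdivides existing leaves, the regions containing $x^*$ form a nested sequence $R_{t_1}(x^*) \supseteq R_{t_2}(x^*) \supseteq \cdots$ for increasing times at which $x^*$'s enclosing leaf changes. By monotonicity, $\mathrm{diam}(R_t(x^*))$ is non-increasing in $t$ and bounded below by $0$, so the limit $D^* := \lim_{t\to\infty}\mathrm{diam}(R_t(x^*))$ exists almost surely; the entire task reduces to showing $D^* = 0$.

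The core of the argument is to rule out the event $D^* > 0$, i.e., that the leaf containing $x^*$ stabilizes at some region $R_\infty$ of positive diameter and is never split again after some finite time. The plan is to show this event has probability zero by a Borel--Cantelli / divergence argument. If $R_\infty$ is a fixed persistent leaf, then by the Sufficient Exploration assumption (\Cref{ass:exploration}) its selection probabilities satisfy $\sum_t \pi_{R_\infty}(t) = \infty$ almost surely, so by the second Borel--Cantelli lemma (the selections being the relevant independent-enough trials given the filtration) $R_\infty$ is selected infinitely often with probability one. Once a region is selected, \Cref{ass:llm} guarantees it receives fresh sampled points with positive probability mass spread over any subball, so its sample count $|I_{R_\infty}|$ grows without bound. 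But the dynamic splitting threshold $m_t = m_0 + \lfloor \lambda \log(1+t)\rfloor$ grows only logarithmically, whereas a region selected infinitely often accumulates points at a rate that eventually overtakes this threshold; hence the refinement trigger fires. Formally, \Cref{ass:refinement} states directly that any leaf selected infinitely often is split infinitely often, contradicting the assumption that $R_\infty$ is never split. Therefore the stabilization event has probability zero.

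Having excluded stabilization at positive diameter, I would conclude that almost surely the branch through $x^*$ is refined infinitely often, producing an infinite strictly-nested sequence of regions containing $x^*$. \Cref{ass:partition} then applies verbatim to this infinite nested sequence and gives $\lim_{t\to\infty}\mathrm{diam}(R_t(x^*)) = 0$, which is exactly $D^* = 0$. Since $x^*$ was arbitrary and the exceptional null set can be taken uniformly (or the statement interpreted pointwise for each fixed $x^*$), the claim holds almost surely.

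I expect the main obstacle to be the Borel--Cantelli step: making rigorous the passage from $\sum_t \pi_{R_\infty}(t) = \infty$ to ``selected infinitely often almost surely.'' The selections at different iterations are not independent, since the probabilities $\pi_k(t)$ depend on the evolving history $\mathcal{D}_t$ and the adaptively changing tree. The clean way to handle this is to invoke the \emph{conditional} (Lévy) extension of the second Borel--Cantelli lemma for adapted sequences: if $A_t = \{R_\infty \text{ selected at } t\}$ is $\mathcal{F}_{t}$-measurable and $\sum_t P(A_t \mid \mathcal{F}_{t-1}) = \sum_t \pi_{R_\infty}(t) = \infty$ a.s., then $P(A_t \text{ i.o.}) = 1$. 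A secondary subtlety is ensuring that $R_\infty$ being a fixed leaf is compatible with the probabilities $\pi_{R_\infty}(t)$ being well-defined for all large $t$ on the stabilization event; this is automatic because on that event the leaf literally persists, so its score and Softmax probability remain defined throughout, and \Cref{ass:exploration} supplies the required divergence.
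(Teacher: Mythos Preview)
Your proposal is correct and follows essentially the same contradiction strategy as the paper: assume the leaf containing $x^*$ eventually stabilizes, use \Cref{ass:exploration} plus Borel--Cantelli to show it is selected infinitely often, then invoke \Cref{ass:refinement} to force a split, contradicting stabilization, and finish with \Cref{ass:partition}. Your treatment is in fact more careful than the paper's on the one delicate point: the paper invokes the second Borel--Cantelli lemma without comment, whereas you correctly flag the dependence issue and resolve it via the conditional (L\'evy) version, which is the right tool here. The intermediate paragraph about \Cref{ass:llm}, growing sample counts, and the logarithmic threshold $m_t$ is unnecessary for the proof as stated---\Cref{ass:refinement} grants the splitting directly, as you yourself note---so you can drop that digression without loss.
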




\begin{proof}
We use contradiction for the proof. Assume that the diameter does not converge to zero. Given the hierarchical nature of the partition, this implies that after some finite time $T_0$, the leaf $\bar{R} = R_{T_0}(x^*)$ is never split again. 
Since $\bar{R}$ remains a leaf for all $t > T_0$, Assumption \ref{ass:exploration} ensures that its cumulative selection probability diverges: $\sum_{t=T_0}^{\infty} \pi_{\bar{R}}(t) = \infty$.
By the second Borel-Cantelli lemma, this series' divergence implies that $\bar{R}$ is selected infinitely often. However, Assumption \ref{ass:refinement} requires that any region selected infinitely often must eventually split. This contradicts the assumption that $\bar{R}$ remains fixed after some time. Consequently, the sequence of regions containing $x^*$ must undergo infinite refinement, driving the diameter to zero (Assumption \ref{ass:partition}).
\end{proof}

\begin{figure*}[ht]
    \centering
    \includegraphics[width=0.99\linewidth]{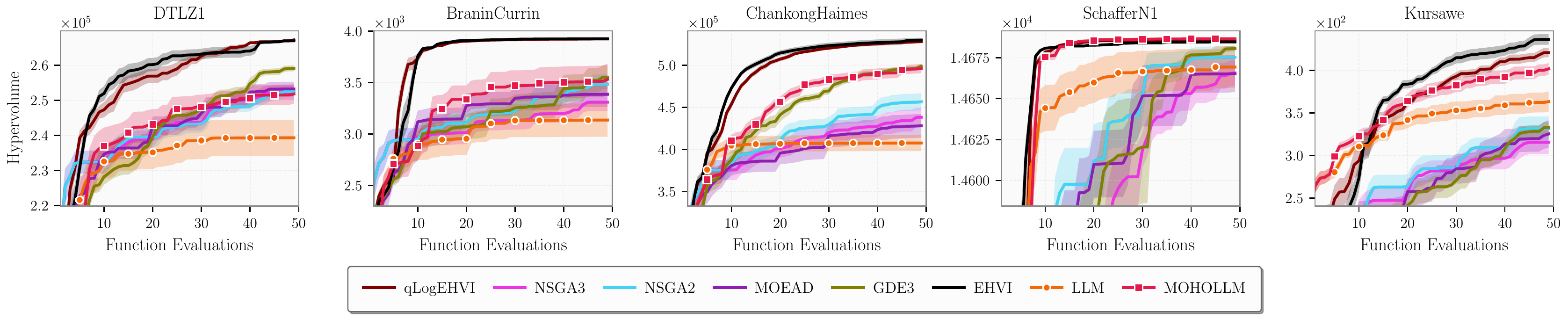}
    \caption{Hypervolume (HV) over function evaluations on synthetic benchmarks (DTLZ1, Branin-Currin, Chankong–Haimes, SchafferN1 and Kursawe). MOHOLLM and the LLM baseline curves are marked to highlight them further.}
    \label{fig:hv_synth}
\end{figure*}

\begin{figure*}[t]
\centering
\begin{minipage}[t]{0.62\linewidth}
    \centering
    \includegraphics[width=\linewidth]{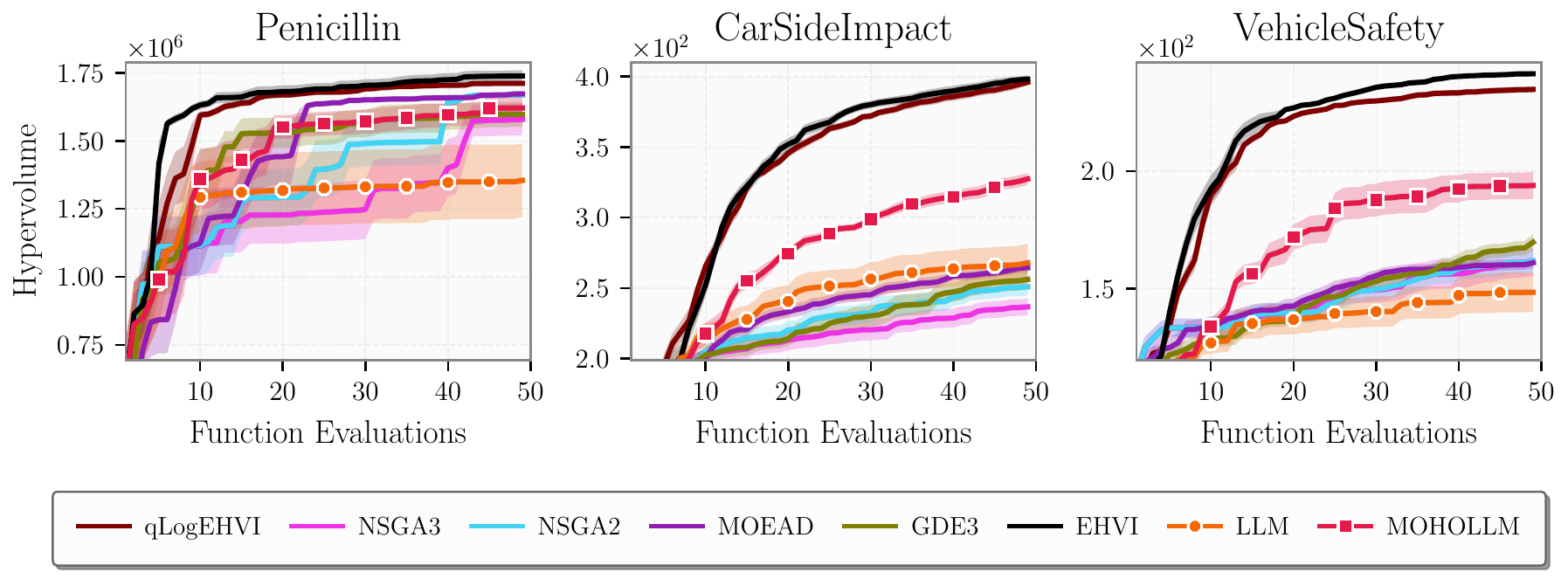}
    \caption{Hypervolume (HV) of MOHOLLM and baselines over function evaluations on real-world benchmarks (Penicillin, Vehicle Safety, and Car Side Impact).}
    \label{fig:hv_real}
\end{minipage}\hfill
\begin{minipage}[t]{0.36\linewidth}
    \centering
    \includegraphics[width=\linewidth]{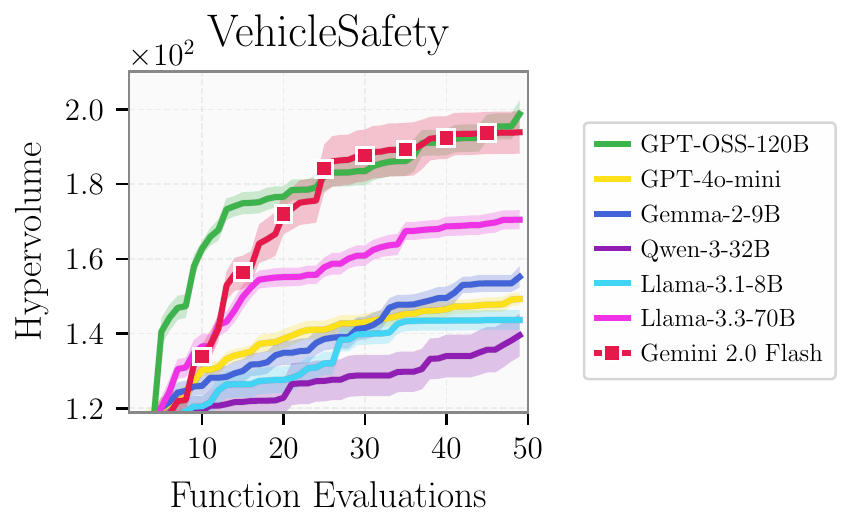}
    \caption{HV trajectories of MOHOLLM with different LLM surrogates and samplers.}
    \label{fig:hv_real_other}
\end{minipage}
\end{figure*}

\begin{theorem}[Almost-Sure Pareto Consistency]
\label{thm:consistency}
Under Assumptions \ref{ass:lipschitz}--\ref{ass:refinement}, the sample set $\mathcal{S}_t$ converges to the Pareto set $\Xstar$ in the Hausdorff distance almost surely:
\[
\lim_{t \to \infty} d_H(\Xstar, \mathcal{S}_t) = 0 \quad \text{a.s.}
\]
\end{theorem}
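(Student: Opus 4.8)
The plan is to prove exactly the directed Hausdorff (coverage) quantity as defined in the excerpt, namely $\sup_{x^\star \in \Xstar} \inf_{x \in \mathcal{S}_t} \|x - x^\star\|_2 \to 0$ almost surely. This is the only half of the Hausdorff distance that the stated $d_H$ encodes, and it is also the only half that can possibly hold, since MOHOLLM deliberately spends evaluations on exploratory points far from $\Xstar$ (so the reverse quantity $\sup_{x\in\mathcal{S}_t}\operatorname{dist}(x,\Xstar)$ need not vanish). The argument proceeds in two stages: a pointwise coverage result for each fixed Pareto point, followed by a uniformization over all of $\Xstar$ via a compactness/covering argument.

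For the pointwise stage, fix $x^\star \in \Xstar$ and $\varepsilon > 0$. By Lemma~\ref{lem:shrinking} (the Shrinking Lemma) there is almost surely a random time $T_0$ such that for all $t \ge T_0$ the leaf $R_t(x^\star)$ containing $x^\star$ satisfies $\mathrm{diam}(R_t(x^\star)) < \varepsilon$; since $x^\star \in R_t(x^\star)$, this forces $R_t(x^\star) \subseteq \overline{B}(x^\star, \varepsilon)$. The same Exploration-plus-Borel–Cantelli mechanism used inside the Shrinking Lemma shows that the nested sequence of leaves containing $x^\star$ is selected infinitely often, so infinitely many selections occur at times $t \ge T_0$. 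On each such selection the LLM proposes a candidate supported on the selected leaf, and Assumption~\ref{ass:llm} guarantees the proposal places strictly positive mass everywhere in the leaf; hence, conditional on selection, a point is added inside $R_t(x^\star) \subseteq \overline{B}(x^\star, \varepsilon)$ with probability at least some constant. Letting $A_t$ be the event ``a sample lands in $\overline{B}(x^\star,\varepsilon)$ at step~$t$'', the conditional probabilities $P(A_t \mid \mathcal{F}_{t-1})$ are at least $c\,\pi_{R_t(x^\star)}(t)$ for $t \ge T_0$, whose sum diverges by Assumption~\ref{ass:exploration}. The conditional (second) Borel–Cantelli lemma for dependent events then yields $P(A_t \text{ i.o.}) = 1$, so almost surely some sample eventually lies in $\overline{B}(x^\star,\varepsilon)$; as samples accumulate, $\inf_{x \in \mathcal{S}_t}\|x-x^\star\|_2 \le \varepsilon$ for all large $t$. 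Since $\varepsilon$ is arbitrary, $\inf_{x \in \mathcal{S}_t}\|x-x^\star\|_2 \to 0$ a.s.\ for each fixed $x^\star$.

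To upgrade pointwise to uniform convergence I would use that $\Xstar$ is precompact inside the compact domain $\X$ (continuity of $\mathbf{F}$ from Assumption~\ref{ass:lipschitz}). Fix $\varepsilon>0$ and choose a finite $\tfrac{\varepsilon}{2}$-net $\{c_1,\dots,c_n\}\subseteq\Xstar$. Applying the pointwise result to each of the finitely many Pareto centers $c_i$ and intersecting the corresponding almost-sure events, there is almost surely a time after which every $c_i$ has a sample within $\tfrac{\varepsilon}{2}$. Any $x^\star \in \Xstar$ lies within $\tfrac{\varepsilon}{2}$ of some $c_i$, so by the triangle inequality it has a sample within $\varepsilon$; thus $d_H(\Xstar,\mathcal{S}_t) < \varepsilon$ for all large $t$. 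Taking $\varepsilon = \varepsilon_m \downarrow 0$ along a countable sequence and intersecting the countably many almost-sure events gives $d_H(\Xstar,\mathcal{S}_t)\to 0$ a.s.

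I expect the main obstacle to be the uniformization step: pointwise coverage must be promoted to a supremum over the uncountable set $\Xstar$, and the finite-net argument only succeeds if one controls the set's topology carefully. In particular, if $\Xstar$ fails to be closed one passes to its closure $\overline{\Xstar}$, using that $x^\star \mapsto \inf_{x\in\mathcal{S}_t}\|x-x^\star\|_2$ is $1$-Lipschitz (so its supremum over $\Xstar$ equals that over $\overline{\Xstar}$) and choosing net centers in the dense subset $\Xstar$ where the Shrinking Lemma applies. A secondary technical point is the constant lower bound on the per-step success probability: because only a size-$b$ batch maximizing predicted Hypervolume is actually evaluated in Step~5, one must argue that candidates generated in the Pareto-relevant shrinking leaf are retained with probability bounded away from zero. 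I would absorb this into Assumption~\ref{ass:llm} together with the observation that a leaf containing a genuine Pareto point carries nonvanishing Hypervolume contribution, so its proposals are not systematically discarded.
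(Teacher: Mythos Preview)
Your proposal is correct and follows essentially the same route as the paper's proof: invoke the Shrinking Lemma to localize the leaf containing a fixed $x^\star$, use the infinite-selection and LLM-sampling assumptions to land a sample in any $\varepsilon$-ball around it, and then uniformize over $\Xstar$ via a finite $\varepsilon/2$-net from compactness. Your version is in fact slightly more careful than the paper's (you invoke the conditional Borel--Cantelli lemma explicitly, worry about closedness of $\Xstar$, and flag the Step~5 batch-filtering issue that the paper silently absorbs into Assumption~\ref{ass:llm}).
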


\begin{proof}
Since $\Xstar$ is compact, it suffices to show that for any $\varepsilon > 0$ and any target $x^* \in \Xstar$, the algorithm eventually places a sample inside the open ball $B(x^*, \varepsilon)$.
Fix $x^* \in \Xstar$ and $\varepsilon > 0$. By Lemma \ref{lem:shrinking}, the diameter of the containing cell $R_t(x^*)$ converges to 0 almost surely. Thus, there exists a finite time $T_{\text{loc}}$ such that for all $t > T_{\text{loc}}$, $R_t(x^*) \subset B(x^*, \varepsilon/2)$.

For $t > T_{\text{loc}}$, any sample generated within $R_t(x^*)$ necessarily lies within $B(x^*, \varepsilon)$. From the proof of Lemma \ref{lem:shrinking}, we know that the sequence of regions $\{R_t(x^*)\}_{t}$ is selected infinitely often. Let $\{t_k\}$ be the sequence of iterations where such a selection occurs. By Assumption \ref{ass:llm}, conditioned on selecting $R_{t_k}(x^*)$, the LLM generates a valid point $\hat{x}_{new} \in R_{t_k}(x^*)$ with probability at least $\eta > 0$. The probability that no sample ever lands in $B(x^*, \varepsilon)$ is bounded by $\lim_{K \to \infty} (1 - \eta)^K = 0$. Thus, $B(x^*, \varepsilon) \cap \mathcal{S}_t \neq \emptyset$ almost surely as $t \to \infty$.

Finally, by compactness, $\Xstar$ admits a finite cover of $\varepsilon/2$-balls centered at $\{z_1, \dots, z_N\}$. Applying the argument above to each $z_i$ simultaneously, there exists a time $T$ such that $\mathcal{S}_T$ intersects every ball in the cover. This implies $\sup_{x^* \in \Xstar} \inf_{x \in \mathcal{S}_T} \|x - x^*\| < \varepsilon$. Since $\varepsilon$ is arbitrary, the Hausdorff distance converges to 0 almost surely.
\end{proof}





\begin{corollary}[Hypervolume Consistency]
\label{cor:hv_consistency}
The hypervolume of the objective vectors $\mathbf{F}(\mathcal{S}_t)$ converges to the hypervolume of the true Pareto Front $\mathbf{F}(\Xstar)$ almost surely.
\end{corollary}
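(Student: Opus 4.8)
The plan is to lift the decision-space convergence of Theorem~\ref{thm:consistency} into objective space and then exploit two structural properties of the hypervolume indicator: monotonicity under domination and Lipschitz continuity under uniform worsening of the front. Throughout I write $\mathcal{F}^\star = \mathbf{F}(\Xstar)$ and fix the reference point $\mathbf{r}$ dominated by the whole front, so that $HV$ is measured as the Lebesgue measure of the dominated region $D = \bigcup_{y^\star \in \mathcal{F}^\star}[y^\star,\mathbf{r}]$.

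First I would transfer the Hausdorff convergence into objective space. By Assumption~\ref{ass:lipschitz} the map $\mathbf{F}$ is Lipschitz, say with constant $L$ in the $\ell_2$ norm, so for every $x^\star \in \Xstar$ and its nearest sample $x \in \mathcal{S}_t$ we have $\|\mathbf{F}(x) - \mathbf{F}(x^\star)\|_2 \le L\,\|x - x^\star\|_2$. Taking the supremum over $x^\star$ and the infimum over $x$, the directed objective-space distance satisfies $\sup_{y^\star \in \mathcal{F}^\star} \inf_{y \in \mathbf{F}(\mathcal{S}_t)} \|y - y^\star\|_2 \le L\, d_H(\Xstar, \mathcal{S}_t)$, which tends to $0$ almost surely by Theorem~\ref{thm:consistency}. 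Hence, almost surely, for any $\delta > 0$ there is a finite time after which every front point $y^\star$ has a sampled objective vector within $\ell_2$-distance $\delta$, and thus within $\ell_\infty$-distance $\delta$.

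Next I would sandwich $HV(\mathbf{F}(\mathcal{S}_t))$ between $HV(\mathcal{F}^\star)$ and a slightly smaller quantity. For the upper bound, since $\mathcal{F}^\star$ is the true front every feasible $y$ is dominated by some $y^\star$, so $[y,\mathbf{r}] \subseteq [y^\star,\mathbf{r}]$; taking unions gives the monotonicity inequality $HV(\mathbf{F}(\mathcal{S}_t)) \le HV(\mathcal{F}^\star)$ for all $t$. For the lower bound I use the $\ell_\infty$ approximation: each $y^\star$ admits a sample $y$ with $y_i \le y^\star_i + \delta$ for all $i$, hence $y$ dominates the shifted point $y^\star + \delta\mathbf{1}$ and $[y^\star + \delta\mathbf{1}, \mathbf{r}] \subseteq [y,\mathbf{r}]$. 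Taking unions yields $HV(\mathbf{F}(\mathcal{S}_t)) \ge HV(\mathcal{F}^\star + \delta\mathbf{1})$, the hypervolume of the uniformly worsened front.

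The remaining, and main, step is to control the gap $HV(\mathcal{F}^\star) - HV(\mathcal{F}^\star + \delta\mathbf{1})$. The dominated region of the shifted front equals $(D + \delta\mathbf{1}) \cap \{z \preceq \mathbf{r}\}$, so by translation invariance of Lebesgue measure the gap equals the measure of the part of $D + \delta\mathbf{1}$ exceeding $\mathbf{r}$ in at least one coordinate. This set is contained in a union of $M$ axis-aligned slabs of thickness $\delta$ lying inside the bounding box of $D$, which is finite because $\mathbf{F}$ maps the compact $\X$ to a compact set (Assumption~\ref{ass:lipschitz}). Thus the gap is at most $C\delta$ for a constant $C$ depending only on $M$ and the box side lengths, giving $HV(\mathcal{F}^\star) - C\delta \le HV(\mathbf{F}(\mathcal{S}_t)) \le HV(\mathcal{F}^\star)$ eventually almost surely. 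Since $\delta > 0$ is arbitrary, $HV(\mathbf{F}(\mathcal{S}_t)) \to HV(\mathcal{F}^\star)$ almost surely. I expect the slab/collar measure estimate for this continuity bound to be the only non-routine part; the Lipschitz transfer and the monotonicity inequality are immediate from the stated assumptions.
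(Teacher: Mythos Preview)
Your proof is correct and follows the same route as the paper—lift the Hausdorff convergence of Theorem~\ref{thm:consistency} to objective space via the Lipschitz map $\mathbf{F}$, then use continuity of the hypervolume—except that you unpack the last step explicitly via the domination upper bound, the shifted-front lower bound, and the slab estimate, whereas the paper simply invokes that $HV$ is Lipschitz continuous with respect to the Hausdorff distance on compact domains. Your explicit sandwich is arguably cleaner about the asymmetry (Theorem~\ref{thm:consistency} controls only the \emph{directed} distance from $\Xstar$ to $\mathcal{S}_t$), a point the paper's one-line appeal leaves implicit.
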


\begin{proof}
Theorem~\ref{thm:consistency} establishes that $\lim_{t \to \infty} d_H(\Xstar, \mathcal{S}_t) = 0$. Since $\mathbf{F}$ is Lipschitz continuous (Assumption~\ref{ass:lipschitz}), this implies that the image $\mathbf{F}(\mathcal{S}_t)$ converges to the true Pareto front $\mathbf{F}(\Xstar)$ in the objective space. Since the HV is Lipschitz continuous w.r.t. $d_H$ on compact domains, the convergence of $\mathbf{F}(\mathcal{S}_t)$ guarantees the convergence of the HV too.
\end{proof}


\section{Numerical Experiments}\label{sec:experiments}

\paragraph{Benchmarks.} We evaluate MOHOLLM on 15 low- to mid-dimensional multi-objective (up to 4 objectives) problems from the \texttt{pymoo}~\cite{pymoo} and \texttt{BoTorch}~\cite{botorch} libraries: 12 synthetic (DTLZ1-3, BraninCurrin, ChankongHaimes, GMM, Poloni, SchafferN1-2, TestFunction4, ToyRobust, Kursawe) and 3 real-world (Penicillin fermentation, VehicleSafety, CarSideImpact). See Appendix~\ref{appendix:benchmark_details} for details on each benchmark.

\paragraph{Baselines.} 
We compare against 2 standard MOBO methods and 11 MOEAs. Full details on these methods are provided in Appendix~\ref{appendix:baselines}.
Additionally, to clearly encapsulate the effect of our algorithmic design choices, we also compare to a global LLM baseline (see Algorithm~\ref{alg:global_llm}), that uses the exact same prompt templates and evaluation scheme as MOHOLLM, but always has only a global view of the input space.

\paragraph{Configuration.} Unless stated otherwise, MOHOLLM uses \texttt{Gemini-2.0-Flash} with the following default hyperparameters: $k=5$ regions, $N=5$ candidates per region, $b=4$ evaluations per trial, $m_0=5$, $\alpha_{\max}=1.0$ (decaying to 0.01), $\beta_1=\beta_2=0.5$. 
In Appendix~\ref{secapp:experiments}, we study the robustness of MOHOLLM towards these hyperparameter choices. We utilize the prompts templates as illustrated in Section~\ref{appendix:prompt_design}. We run each method for 50 function evaluations (including the 5 initial random samples) 10 times with different random seeds and report the mean HV $\pm$ 95\% confidence interval (CI).

\begin{figure*}[ht]
\centering
\begin{subfigure}{0.19\linewidth}
    \centering
    \includegraphics[width=\linewidth]{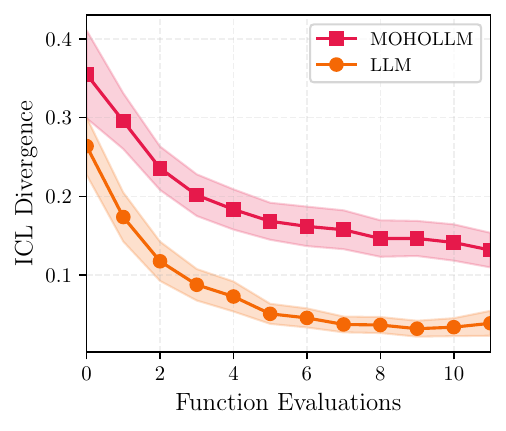}
    \caption{}
    \label{fig:icl_divergence_agg}
\end{subfigure}\hfill
\begin{subfigure}{0.19\linewidth}
    \centering
    \includegraphics[width=\linewidth]{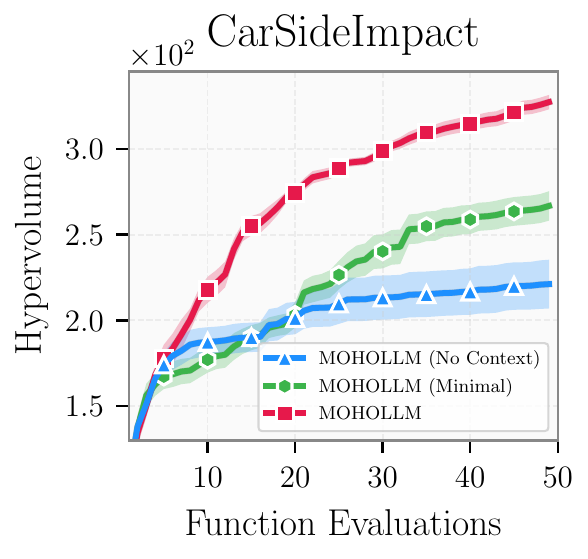}
    \caption{}
    \label{fig:prompt_ablations}
\end{subfigure}\hfill
\begin{subfigure}{0.19\linewidth}
    \centering
    \includegraphics[width=\linewidth]{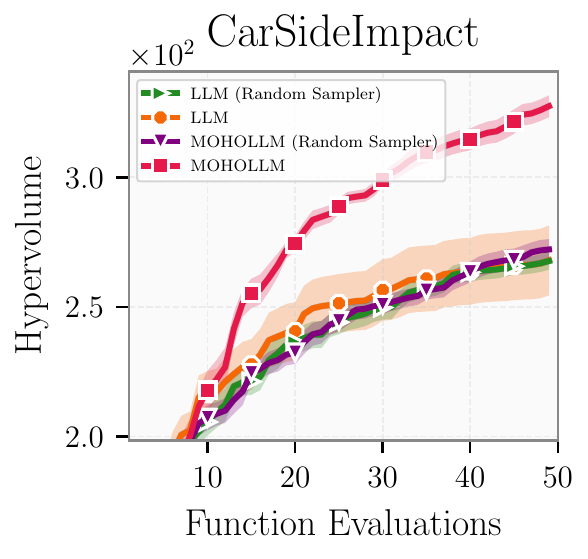}
    \caption{}
    \label{fig:candidate_sampler_ablation}
\end{subfigure}\hfill
\begin{subfigure}{0.19\linewidth}
    \centering
    \includegraphics[width=\linewidth]{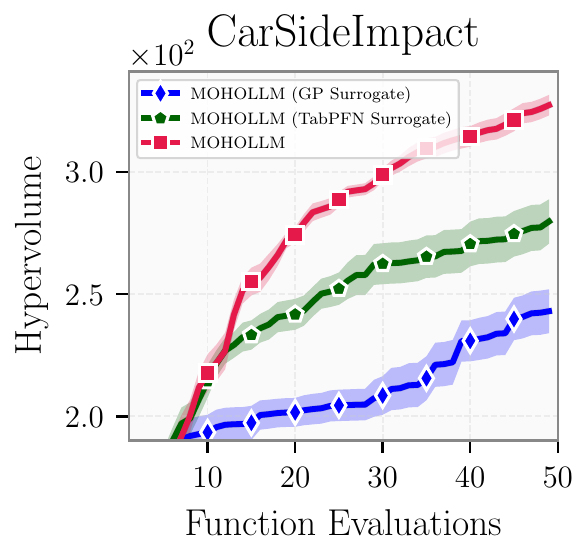}
    \caption{}
    \label{fig:surrogate_model_ablation}
\end{subfigure}\hfill
\begin{subfigure}{0.17\linewidth}
    \centering
    \includegraphics[width=\linewidth]{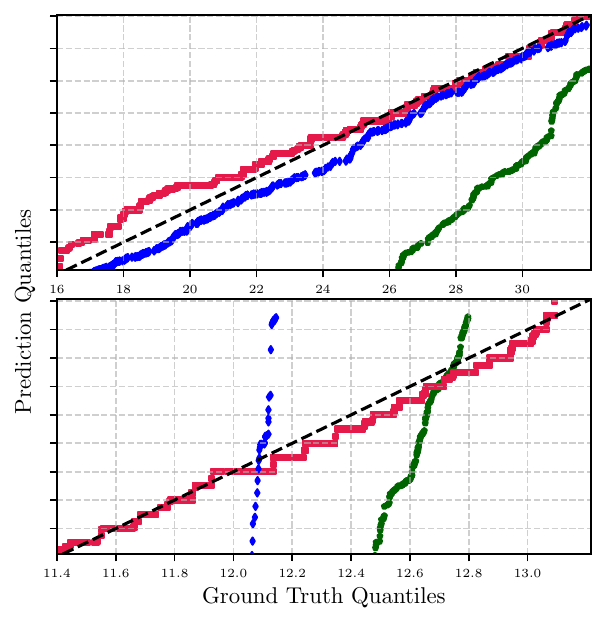}
    \caption{}
    \label{fig:qq_carside}
\end{subfigure}
\vspace{-2mm}
\caption{Ablation and analysis results.
(a) Aggregate ICL divergence over function evaluations.
(b) Prompt ablations.
(c) Candidate sampler ablation comparing LLM-based and random sampling strategies.
(d) Surrogate model ablation comparing LLM, GP, and TabPFN-v2 surrogates.
(e) Q--Q plot on 2 objectives from CarSideImpact, comparing predicted versus true objective value distributions.}
\label{fig:ablations_analysis}
\end{figure*}

\subsection{Results on Synthetic Benchmarks}

Results in Figure~\ref{fig:hv_synth} (see Figure~\ref{fig:hv_synth_all} in Appendix~\ref{secapp:experiments} for results on all benchmarks) show that MOHOLLM demonstrates consistent improvement compared to the global LLM baseline throughout the full evaluation budget across most benchmarks. 
This indicates that MOHOLLM is able find diverse solution sets across problems with different Pareto front characteristics -- from the linear and convex fronts of DTLZ1, Chankong–Haimes and SchafferN1 to the disconnected, highly nonconvex fronts of Branin-Currin and Kursawe. 
Although MOHOLLM does not surpass the competitive MOBO methods, which are known to work particularly well for small evaluation budgets, its performance places it within the competitive range of established MOEAs.

\subsection{Results on Real-World Benchmarks}

Real-world applications usually involve expensive-to-evaluate objectives, e.g., mechanical simulations or chemical processes. We consider the problems of optimizing VehicleSafety for side-impact collisions based on established crash simulations (CarSideImpact; $d=7$, $M=4$) and its crashworthiness design during a frontal impact (VehicleSafety, $d=5$, $M=3$)~\cite{liang2021scalable}. Conflicting objectives involve balancing the weight of a car (low weight brings lower manufacturing cost and less fuel consumption) and its damage to both the car and passengers during collision (lighter materials leads to more damage). We also evaluate MOHOLLM on the Penicillin production problem ($d=7$, $M=3$)~\cite{tanabe20}, where the objective is to maximize penicillin yield while simultaneously minimizing fermentation time and CO$_2$ emissions by optimizing seven initial reaction conditions.

As shown in Figure~\ref{fig:hv_real} (see Figure~\ref{fig:hv_real_all} for results with more MOEAs and Figure~\ref{fig:eaf} for the Pareto front plots), the global LLM baseline likely gets stuck exploiting sub-optimal regions, hence explaining low and stagnating HV on all tasks. On the other hand, although MOBO methods~\cite{daulton2020differentiable,emmerich11} remain superior on real-world tasks due to their sample efficiency, MOHOLLM consistently shows stronger performance than all MOEAs while avoiding the premature convergence of the global LLM-based search.






\subsection{Analysis and Ablations}\label{sec:analysis}

\paragraph{Impact of LLM type/size.}
Testing different LLMs on VehicleSafety (Figure~\ref{fig:hv_real_other}) shows a hierarchy in performance. Top performers (Gemini-2.0-Flash, GPT-OSS-120B, LLaMA-3.3-70B) are larger models optimized for instruction-following and reasoning. Smaller models (LLaMA-3.1-8B, Gemma-2-9B, Qwen3-32B) show significantly worse performance, suggesting that the LLM size and capabilities are relevant for such tasks. In Appendix~\ref{appendix:model_costs_api_usage} we provide operational costs' breakdown for using each model.

\paragraph{Sample Diversity.} 
To quantify exploration we compute the average distance of generated proposals from the historical evaluated examples: $D_{\text{ICL}} = 1/|S| \sum_{s \in S} \min_{i \in I} d(s, i)$, where $S$ are new proposals, $I$ are the examples, and $d(\cdot,\cdot)$ is Euclidean distance in the normalized unit hypercube. We call this metric in-context learning (ICL) divergence. Figure~\ref{fig:icl_divergence_agg} plots the aggregated ICL divergence across all benchmarks. We can see that MOHOLLM generates more diverse (higher ICL div.) candidates at each iteration compared to the global LLM sampler, demonstrating that the search space partitioning enforces more in-region exploration and prevents mode collapse (also in line with Assumption~\ref{ass:llm} from Section~\ref{sec:theory}).

\paragraph{Prompt Variations.}
We evaluate \texttt{Context} (default MOHOLLM setting with problem description in prompts), \texttt{No Context}, and \texttt{Minimal} prompts to assess the utility of domain knowledge in optimization (prompt variations shown in Appendix~\ref{appendix:prompt_design}). Real-world benchmark results in Figure~\ref{fig:prompt_ablations} and Figure~\ref{fig:hv_real_prompt_ablation_all} indicate that providing task context in the prompts typically results in better performance of MOHOLLM.

\paragraph{Impact of LLMs as Surrogate Models and Candidate Samplers.}
We also quantify the LLM contribution by isolating its roles in MOHOLLM. \textit{As a candidate sampler}, we see from Figure~\ref{fig:candidate_sampler_ablation} that MOHOLLM significantly outperforms its variant with random uniform sampling, both globally or inside input space partitions, confirming its ability to generate more informative proposals. \textit{As a surrogate model}, the LLM (Gemini-2.0-Flash) demonstrates reliable predictive performance on the real-world tasks, as shown in Figure~\ref{fig:surrogate_model_ablation} and the Q-Q plot in Figure~\ref{fig:qq_carside}, achieving a high Spearman rank correlation ($0.874$) and $R^2$ ($0.757$). In contrast, Gaussian Processes~\cite{Rasmussen2006Gaussian} and TabPFN-v2~\cite{hollmann2025tabpfn} fail to capture the objective structure, exhibiting near-zero correlations and highly negative $R^2$ scores in some objectives. We provide additional results on the other real-world benchmarks in Appendix~\ref{secapp:experiments}.

\section{Conclusion and Discussion}

We proposed MOHOLLM, a method that improves LLM-based multi-objective optimization through hierarchical space partitioning and a multi-armed bandit-insipired scoring for the resulting regions. Extensive experimental evaluations across 15 benchmarks, demonstrated that MOHOLLM showed consistent improvements compared to the global LLM-based baseline, while at the same time being competitive with established multi-objective evolutionary algorithms. We expect that MOHOLLM will open a pathway towards faithful integration of LLMs in multi-objective optimization.

Our method has also some limitations. The current reliance on axis-aligned KD-trees makes applicability to non-Euclidean domains not trivial. Furthermore, LLM inference cost increase with the problem dimensionality, making it hard for MOHOLLM to run on high evaluation budgets. Finally, while we established Pareto consistency, deriving formal asymptotic Hypervolume regret bounds~\cite{zhao2022multiobjective,golovin20} remains an open future problem.

\bibliographystyle{named}
\bibliography{ijcai26,hollm}

\clearpage
\newpage

\section*{Acknowledgments}
Frank Hutter, Arber Zela and David Salinas acknowledge the financial support of the Hector Foundation.

\appendix


\section{Details on Multi-objective Tasks}
\label{appendix:benchmark_details}

We evaluate MOHOLLM on a diverse suite of 15 benchmark problems designed to cover a wide range of structural properties encountered in multi-objective optimization.

\subsection{Synthetic Tasks}

We utilize the implementation from \texttt{pymoo}~\footnote{\url{https://pymoo.org/problems/test_problems.html}} for the synthetic tasks.

\paragraph{DTLZ1–3 ($d=6,\ M=2$).}
The DTLZ test suite defines scalable $M$-objective problems over decision variables $\mathbf{x}=(x_1,\dots,x_d)\in[0,1]^d$.
For all DTLZ problems, let $k=d-M+1$ and
$$
g(\mathbf{x}_M)=\sum_{i=M}^{d}\left[(x_i-0.5)^2-\cos\!\left(20\pi(x_i-0.5)\right)\right].
$$
\emph{DTLZ1} is defined as
$$
f_m(\mathbf{x})=\frac{1}{2}(1+g)\prod_{i=1}^{M-m} x_i \times
\begin{cases}
1-x_{M-m+1}, & m>1,\\
1, & m=1,
\end{cases}
$$
yielding a linear Pareto front with many local optima induced by $g$.
\emph{DTLZ2} replaces $g$ with $g(\mathbf{x}_M)=\sum_{i=M}^{d}(x_i-0.5)^2$ and defines
\begin{small}
$$
f_m(\mathbf{x})=(1+g)\prod_{i=1}^{M-m}\cos\!\left(\tfrac{\pi}{2}x_i\right)
\times
\begin{cases}
\sin\!\left(\tfrac{\pi}{2}x_{M-m+1}\right), & m>1,\\
1, & m=1,
\end{cases}
$$    
\end{small}
resulting in a smooth spherical Pareto front.
\emph{DTLZ3} uses the same objectives as DTLZ2 but the multimodal $g$ of DTLZ1, introducing many deceptive local fronts.

\paragraph{Branin–Currin ($d=2,\ M=2$).}
This bi-objective problem is defined over $\mathbf{x}=(x_1,x_2)\in[0,1]^2$.
The first objective is a scaled Branin function:
\begin{small}
\begin{align*}
    f_1(\mathbf{x})= &
\left(x_2-\frac{5.1}{4\pi^2}x_1^2+\frac{5}{\pi}x_1-6\right)^2 \\
&+10\left(1-\frac{1}{8\pi}\right)\cos(x_1)+10,
\end{align*}
\end{small}
and the second is the Currin function:
\begin{small}
$$
f_2(\mathbf{x})=
\left(1-\exp\!\left(-\frac{1}{2x_2}\right)\right)
\frac{2300x_1^3+1900x_1^2+2092x_1+60}
{100x_1^3+500x_1^2+4x_1+20}
$$
\end{small}
The conflicting landscapes yield a highly nonconvex Pareto front.

\paragraph{Chankong–Haimes ($d=2,\ M=2$).}
A 2-dimensional, 2-objective optimization benchmark defined on the domain $[-20, 20]^2$. We use the unconstrained version of this problem, dropping the original constraints, therefore resulting in a convex Pareto front. The objective functions are given by:
\begin{align*}
    &f_1(x, y) = 2 + (x - 2)^2 + (y - 1)^2\\
    &f_2(x, y) = 9x - (y - 1)^2
\end{align*}

\paragraph{GMM ($d=2,\ M=2$).}
Defined on the input unit square $[0, 1]^2$, it features two objectives, each independently defined by the a Gaussian Mixture Model (GMM). The problem is designed for minimization and incorporates multiplicative input noise $\xi \sim \mathcal{N}(\mu=[1,1], \Sigma=0.07I_2)$, applied before each function evaluation.

\paragraph{Poloni ($d=2,\ M=2$).} Defined on the domain $[-\pi, \pi]^2$, it features a non-convex, disconnected Pareto front. The objective functions are given by:
\begin{align*}
    f_1(x, y) &= 1 + (A_1 - B_1(x, y))^2 + (A_2 - B_2(x, y))^2 \\
    f_2(x, y) &= (x + 3)^2 + (y + 1)^2 
\end{align*}
where the constants $A_1, A_2$ and terms $B_1, B_2$ are defined as:
\begin{align*}
    A_1 &= 0.5 \sin(1) - 2 \cos(1) + \sin(2) - 1.5 \cos(2) \\
    A_2 &= 1.5 \sin(1) - \cos(1) + 2 \sin(2) - 0.5 \cos(2) \\
    B_1(x, y) &= 0.5 \sin(x) - 2 \cos(x) + \sin(y) - 1.5 \cos(y) \\
    B_2(x, y) &= 1.5 \sin(x) - \cos(x) + 2 \sin(y) - 0.5 \cos(y)
\end{align*}

\paragraph{SchafferN1 ($d=1,\ M=2$).} A classic, 1-dimensional, 2-objective optimization benchmark defined on the domain $[-10, 10]$ (parameter $A=10$). It is known for its simple structure but challenging convergence properties. The objectives are:
\begin{align*}
    f_1(x) &= x^2 \\
    f_2(x) &= (x - 2)^2 
\end{align*}

\paragraph{SchafferN2 ($d=1,\ M=2$).} Defined on the domain $[-5, 10]$, SchafferN2 features a piecewise linear first objective, adding complexity to the optimization landscape. The objectives are:
\begin{align*}
    f_1(x) &=
    \begin{cases}
        -x & \text{if } x \le 1 \\
        x - 2 & \text{if } 1 < x \le 3 \\
        4 - x & \text{if } 3 < x \le 4 \\
        x - 4 & \text{if } x > 4
    \end{cases} \\
    f_2(x) &= (x - 5)^2 
\end{align*}

\paragraph{TestFunction4 ($d=2,\ M=2$).} Defined on the domain $[-7, 4]^2$. While the original problem includes linear constraints, we utilize the unconstrained version in this work. The objective functions are:
\begin{align*}
    f_1(x, y) &= x^2 - y \\
    f_2(x, y) &= -0.5x - y - 1
\end{align*}

\paragraph{ToyRobust ($d=1,\ M=2$).} A benchmark defined on the domain $[0.0, 0.7]$, designed such that the Pareto front is sensitive to input noise and is formulated for minimization. The first objective function $f_1(x)$ is defined as:
\begin{align*}
    f_1(x) = &-30 \left( p_1(x) \cdot \sigma(x) + p_2(x) \cdot (1 - \sigma(x)) + s(x) + 1 \right) \\
    \text{where } &\quad p_1(x) = \ 2.4 - 10x - 0.1x^2 \\
    &\quad p_2(x) = \ 2x - 0.1x^2 \\
    &\quad s(x) = \ (x - 0.5)^2 + 0.1 \sin(30x) \\
    &\quad \sigma(x) = \ \text{sigmoid}((0.2 - x) / 0.005) 
\end{align*}
The second objective function $f_2(x)$ uses a modified 2-dimensional Levy function, applied to a transformed input $x' = 0.95x + 0.03$ with the second dimension fixed at 0:
\begin{align*}
    f_2(x) &= \text{Levy}(x', 0) - 0.75 (x')^2 
\end{align*}
The $n$-dimensional Levy function $f_{\text{Levy}}(z_1, \dots, z_d)$ is generally defined as:
\begin{align*}
    f_{\text{Levy}}(z) &= \sin^2(\pi w_1) \\
        &\quad + \sum_{i=1}^{n-1} (w_i-1)^2 (1 + 10 \sin^2(\pi w_i + 1)) \\
        &\quad + (w_n - 1)^2 (1 + \sin^2(2 \pi w_n)) \\
    \text{where } w_i &= 1 + \frac{z_i - 1}{4} \quad \text{for } i = 1, \dots, d 
\end{align*}
Here, $n=2$, $z_1 = x'$, and $z_2 = 0$.

\paragraph{Kursawe ($d=3,\ M=2$).} A 3-dimensional, 2-objective optimization problem defined on the domain $[-5, 5]^3$. It is known for its disconnected Pareto front, making it deceptive and challenging for optimizers regarding solution diversity. The 2 objectives are:
\begin{align*}
    f_1(x_1, x_2, x_3) &= \sum_{i=1}^{2} -10 \exp\left(-0.2 \sqrt{x_i^2 + x_{i+1}^2}\right) \\
    f_2(x_1, x_2, x_3) &= \sum_{i=1}^{3} \left( |x_i|^{0.8} + 5 \sin(x_i^3) \right)
\end{align*}

\subsection{Real-world Benchmarks}
We utilize the benchmark implementations provided by the \texttt{BoTorch} library\footnote{\url{https://github.com/meta-pytorch/botorch/blob/main/botorch/test_functions/multi_objective.py}}. These problems represent challenging black-box optimization scenarios characterized by high-dimensional continuous search spaces and conflicting objectives.

\paragraph{Penicillin Fermentation ($d=7, M=3$)}\cite{liang2021scalable}.
This benchmark simulates a fed-batch penicillin production process governed by a system of nonlinear ordinary differential equations (ODEs). The decision vector $\mathbf{x} \in \mathbb{R}^7$ controls the initial culture volume ($V$), biomass concentration ($X$), temperature ($T$), glucose concentration ($S$), substrate feed rate ($F$), substrate feed concentration ($s_f$), and pH ($H^+$).
The simulation evolves over discrete time steps $t$, updating state variables according to the following dynamics (simplified for brevity):
\begin{align*}
    &\frac{dV}{dt} = F - V \lambda (e^{5 \frac{T-T_o}{T_v-T_o}} - 1) \\
    &\frac{dX}{dt} = \mu(\mathbf{x}) X - \frac{X}{V}\frac{dV}{dt} \\
    &\frac{dS}{dt} = -\left(\frac{\mu}{Y_{xs}} + \frac{\mu_p}{Y_{ps}} + m_X\right)X + \frac{F s_f}{V} - \frac{S}{V}\frac{dV}{dt} \\
    &\frac{dP}{dt} = \mu_p X - K P - \frac{P}{V}\frac{dV}{dt} \\
    &\frac{dCO_2}{dt} = \alpha_1 \frac{dX}{dt} + \alpha_2 X + \alpha_3
\end{align*}
where $\mu$ and $\mu_p$ are nonlinear growth rates dependent on $T, S,$ and $H$. The optimization minimizes three objectives based on the final state at time $T_{final}$:
\begin{align*}
    f_1(\mathbf{x}) &= -P(T_{final}) \\
    f_2(\mathbf{x}) &= CO_2(T_{final}) \\
    f_3(\mathbf{x}) &= T_{final}
\end{align*}
representing the negative penicillin yield (maximization), total CO$_2$ production, and total fermentation time.

\paragraph{VehicleSafety ($d=5, M=3$)}\cite{tanabe20}.
This problem optimizes vehicle crashworthiness during a frontal impact. The input $\mathbf{x} \in [1, 3]^5$ represents the thickness of five structural components: bumper beam ($x_1$), crash box ($x_2$), longitudinal rails ($x_3$), A-pillar ($x_4$), and dash panel ($x_5$).

The three minimization objectives are defined as:
\begin{align*}
    f_1(\mathbf{x}) &= 1640.2823 + 2.3573285x_1 + 2.3220035x_2 \\
    &\quad + 4.5688768x_3 + 7.7213633x_4 + 4.4559504x_5 \\
    f_2(\mathbf{x}) &= 6.5856 + 1.15x_1 - 1.0427x_2 + 0.9738x_3 \\
    &\quad + 0.8364x_4 - 0.3695x_1x_4 + 0.0861x_1x_5 \\
    &\quad + 0.3628x_2x_4 - 0.1106x_1^2 - 0.3437x_3^2 \\
    &\quad + 0.1764x_4^2 \\
    f_3(\mathbf{x}) &= -0.0551 + 0.0181x_1 + 0.1024x_2 + 0.0421x_3 \\
    &\quad - 0.0073x_1x_2 + 0.024x_2x_3 - 0.0118x_2x_4 \\
    &\quad - 0.0204x_3x_4 - 0.008x_3x_5 - 0.0241x_2^2 \\
    &\quad + 0.0109x_4^2
\end{align*}
corresponding to Mass ($f_1$), Acceleration ($f_2$), and Intrusion ($f_3$).

\paragraph{CarSideImpact ($d=7, M=4$)}\cite{tanabe20}.
This benchmark addresses side-impact safety. Inputs $\mathbf{x} \in \mathbb{R}^7$ define the thickness of components like B-pillars and door beams.

The problem defines four minimization objectives:
\begin{align*}
    f_1(\mathbf{x}) &= 1.98 + 4.9x_1 + 6.67x_2 + 6.98x_3 + 4.01x_4 \\
    &\quad + 1.78x_5 + 10^{-5}x_6 + 2.73x_7 \\
    f_2(\mathbf{x}) &= 4.72 - 0.5x_4 - 0.19x_2x_3 \\
    f_3(\mathbf{x}) &= 0.5 \cdot (V_{MBP} + V_{FD}) \\
    f_4(\mathbf{x}) &= \sum_{i=1}^{10} \max(0, -g_i(\mathbf{x}))
\end{align*}
where $f_1$ is Weight, $f_2$ is Pubic Force, and $f_3$ is the average intrusion velocity derived from $V_{MBP}$ and $V_{FD}$. $f_4$ aggregates violations of 10 structural constraints $g_i(\mathbf{x})$.

\subsection{Hypervolume Reference Points}\label{appendix:hypervolume_reference_points}

The Hypervolume is computed using the reference points shown in Table~\ref{tab:hv_reference_points} for each benchmark. These reference points are chosen to be dominated by all points on the Pareto front, ensuring meaningful Hypervolume calculations. The reference points on the real-world benchmark (Penicillin, VehicleSafety and CarSideImpact) have been taken from the \texttt{pymoo} \cite{pymoo} library. The reference points of the other benchmarks have been computed by identifying the worst point in the pool of all proposals across all runs and methods with an $1\%$ offset.

\begin{table}[ht]
\centering
\caption{Hypervolume reference points used for performance evaluation across all benchmarks.}
\label{tab:hv_reference_points}
\resizebox{\linewidth}{!}{%
\begin{tabular}{@{}lc@{}}
\toprule
\textbf{Benchmark} & \textbf{Reference Point} \\
\midrule
DTLZ1 & $[510.57419, 528.80469]$ \\
DTLZ2 & $[2.2725, 2.2725]$ \\
DTLZ3 & $[1109.84052, 1109.84052]$ \\
BraninCurrin & $[311.21029, 13.91174]$ \\
ChankongHaimes & $[936.27, 180.3557]$ \\
GMM & $[0.0, 0.0]$ \\
Poloni & $[62.2463, 52.57454]$ \\
SchafferN1 & $[101.0, 145.44]$ \\
SchafferN2 & $[6.06, 101.0]$ \\
TestFunction4 & $[56.56, 9.595]$ \\
ToyRobust & $[49.995, 37.36394]$ \\
Kursawe & $[-4.91062, 24.01174]$ \\
\midrule
Penicillin & $[25.935, 57.612, 935.5]$ \\
CarSideImpact & $[45.4872, 4.5114, 13.3394, 10.3942]$ \\
VehicleSafety & $[1864.72022, 11.81993945, 0.2903999384]$ \\
\bottomrule
\end{tabular}
}
\end{table}

For the real-world benchmarks, the maximum achievable hypervolume is documented in Table~\ref{tab:max_hypervolume}. The maximum hypervolume was taken from the \texttt{BoTorch} framework.

\begin{table}[ht]
\centering
\caption{Maximum achievable hypervolume for real-world benchmarks according to the \texttt{BoTorch} framework.}
\label{tab:max_hypervolume}
\begin{tabular}{@{}lc@{}}
\toprule
\textbf{Benchmark} & \textbf{Maximum Hypervolume} \\
\midrule
Penicillin & $2{,}183{,}455.909507436$ \\
CarSideImpact & $484.72654347642793$ \\
VehicleSafety & $246.81607081187002$ \\
\bottomrule
\end{tabular}
\end{table}

\section{Baselines}\label{appendix:baselines}
This section provides a brief description of the baseline algorithms used to compare MOHOLLM throughout this paper. The baselines are grouped into two categories: Multi-Objective Bayesian Optimization (MOBO) methods, primarily implemented in \texttt{BoTorch} \cite{botorch} and Multi-Objective Evolutionary Algorithms (MOEAs), implemented in \texttt{pymoo} \cite{pymoo} and \texttt{platypus}\footnote{\url{https://github.com/Project-Platypus/Platypus}}. We ran all library-based methods with their standard default parameters.

\begin{algorithm}[t]
\caption{Global LLM Baseline Pseudocode}
\label{alg:global_llm}
\begin{algorithmic}[1]
\REQUIRE Initial $\mathcal{D}_{t_0}$, budget $T$, batch size $b$, candidates $N$.
\STATE Initialize $t \leftarrow t_0$.
\WHILE{$t < T$}
    \STATE \textbf{// Step 1: Global Sampling}
    \STATE Generate $N$ candidates $\hat{x} \sim p_{\text{LLM}}(x \mid \mathcal{C}_{global}(\mathcal{D}_t))$
    
    \STATE \textbf{// Step 2: Evaluation}
    \STATE For all candidates predict $\hat{\mathbf{y}} \sim p_{\text{LLM}}(\mathbf{y} \mid \mathcal{C}(\mathcal{D}_t,\hat{x}))$
    \STATE Select batch $B^*$ of size $b$ maximizing predicted HV
    \STATE Evaluate true objectives: $\mathbf{Y}^{*} \leftarrow \mathbf{F}(B^*)$
    \STATE Update $\mathcal{D}_{t} \leftarrow \mathcal{D}_t \cup (B^*, \mathbf{Y}^{*})$
    \STATE $t \leftarrow t + b$
\ENDWHILE
\RETURN $\mathcal{F}_{final}$ (Pareto Front of $\mathcal{D}_T$)
\end{algorithmic}
\end{algorithm}

\paragraph{Multi-objective Bayesian Optimization Baselines.}
We ran two strong standard MOBO baselines.
\begin{description}
    \item[\texttt{EHVI}~{\normalfont\cite{emmerich11}}] The Expected Hypervolume Improvement is a classic MOBO acquisition function that selects the single point that is expected to provide the largest increase to the hypervolume of the current Pareto front approximation.
    \item[\texttt{qLogEHVI}~{\normalfont\cite{daulton2020differentiable,ament2023unexpected}}] A parallelized version of EHVI designed for improved numerical stability during acquisition function optimization. The $q$ indicates batch evaluation of $q$ points. The $Log$ refers to internal log-transformations and smooth approximations applied during the calculation, preventing the acquisition values and gradients from vanishing numerically, which may be problematic in the standard \texttt{qEHVI}~\cite{ament2023unexpected,daulton2020differentiable}.
\end{description}

\paragraph{Multi-objective Evolutionary Algorithm Baselines.}
We compare MOHOLLM against a comprehensive suite of 12 established MOEAs that are a standard choice in multi-objective optimization.
\begin{description}
    \item[\texttt{NSGA-II}~{\normalfont\cite{deb2002fast}}] The Non-dominated Sorting Genetic Algorithm II is one of the most widely used and influential MOEAs. It uses a non-dominated sorting procedure for exploitation and a crowding distance mechanism to promote diversity.

    \item[\texttt{NSGA-III}~{\normalfont\cite{deb2014evolutionary}}] An extension of \texttt{NSGA-II} designed to handle many-objective optimization problems. It replaces the crowding distance with a set of pre-defined reference directions to maintain diversity.

    \item[\texttt{SPEA2}~{\normalfont\cite{zitzler2001spea2}}] The Strength Pareto Evolutionary Algorithm 2 is a classic MOEA that uses a concept of strength, which is the number of solutions an individual dominates, and a density estimation technique to guide the search.

    \item[\texttt{MOEA/D}~{\normalfont\cite{zhang2007moead}}] The Multi-Objective Evolutionary Algorithm based on Decomposition is a method that decomposes the multi-objective problem into $N$ single-objective optimization subproblems, each defined by a different weight vector applied to an aggregation function (like Tchebycheff or weighted sum). The algorithm then optimizes these $N$ subproblems simultaneously in a collaborative manner by defining a \textit{neighborhood} for each subproblem based on the $T$ closest weight vectors. During optimization, parents for a subproblem are selected only from its neighborhood, and a new solution can replace the current solutions of its neighboring subproblems, allowing good solutions to spread locally.

    \item[\texttt{GDE3}~{\normalfont\cite{kukkonen2005gde3}}] The third version of Generalized Differential Evolution, which adapts the Differential Evolution (DE) algorithm for multi-objective constrained optimization. It improves upon earlier GDE versions by incorporating non-dominated sorting and crowding distance.

    \item[\texttt{IBEA}~{\normalfont\cite{zitzler2004indicator}}] The Indicator-Based Evolutionary Algorithm. It uses a quality indicator, such as hypervolume contribution, directly as the fitness function for selection.

    \item[\texttt{SMSEMOA}~{\normalfont\cite{beume2007sms}}] The S-Metric Selection Evolutionary Multi-objective Algorithm is a steady-state algorithm, meaning only one individual is replaced in the population each generation. It uses the hypervolume indicator (S-metric) contribution to select which solution to discard. Typically, the algorithm removes the individual contributing the least hypervolume from the worst non-dominated front to make room for a new offspring.
    
    \item[\texttt{PESA2}~{\normalfont\cite{corne2001pesa2}}] The Pareto Envelope-based Selection Algorithm 2 introduces a region-based selection mechanism for MOEAs. Instead of assigning fitness to individual solutions, PESA2 partitions the \textit{objective space} into hyperrectangles (regions) and assigns fitness to these regions based on the number of non-dominated solutions they contain in an external archive. A parent to mutate is then selected by first choosing a hyperrectangle that favors less crowded regions to promote diversity and then randomly selecting an individual from within that hyperrectangle.
    
    \item[\texttt{RNSGA-II}~{\normalfont\cite{deb2006reference}}] A variant of \texttt{NSGA-II} that incorporates reference points to guide the search and can be effective particularly for many-objective problems.

    \item[\texttt{UNSGA3}~{\normalfont\cite{seada2016unified}}] The Unified Non-dominated Sorting Genetic Algorithm III can efficiently tackle single-, multi-, and many-objective optimization problems. It modifies the \texttt{NSGA-III} approach primarily through a niching-based tournament selection mechanism.

    \item[\texttt{CTAEA}~{\normalfont\cite{li2018two}}] A \textit{Two-Archive} algorithm that maintains one archive for convergence and another for diversity, aiming to balance the two explicitly.
\end{description}

\section{Operational Costs and API Usage}\label{appendix:model_costs_api_usage}
In this section, we provide the API costs and providers for the LLMs used throughout the experiments in this paper. All costs are measured per million tokens and are based on the pricing as of the time of the experiments. The models were accessed through the following main providers: Google API~\footnote{\url{https://cloud.google.com/apis}}, Nebius AI~\footnote{\url{https://github.com/nebius/api}} and OpenAI API~\footnote{\url{https://platform.openai.com/docs/api-reference}}.

\begin{table}[ht]
    \centering
    \caption{API cost structure for language models used in the experiments.}
    \label{tab:model_costs}
    \resizebox{\linewidth}{!}{%
    \begin{tabular}{@{}llcc@{}}
    \toprule
        \textbf{Model} & \textbf{Provider} & \textbf{Input Cost} & \textbf{Output Cost} \\
        & & \textbf{(per 1M tokens)} & \textbf{(per 1M tokens)} \\
    \midrule
    Gemini-2.0-Flash & Google & \$0.10 & \$0.40 \\
    
    Gemma-2-2B & Nebius & \$0.10 & \$0.10 \\
    
    Gemma-2-9B & Nebius & \$0.10 & \$0.10 \\
    
    GPT-4o-mini & OpenAI & \$0.15 & \$0.60 \\
    
    GPT-OSS-120B & Nebius & \$0.60 & \$0.60 \\
    
    LLaMA-3.1-8B & Nebius & \$0.10 & \$0.10 \\
    
    LLaMA-3.3-70B & Nebius & \$0.60 & \$0.60 \\
    
    Qwen3-32B & Nebius & \$0.60 & \$0.60 \\
    \bottomrule
    \end{tabular}
    }
\end{table}

In Table~\ref{tab:model_operational_costs} we show the measured operational costs of MOHOLLM with different LLM models on the VehicleSafety benchmark. Gemini-2.0-Flash is the fastest model both per trial and overall, while LLaMA-3.1-8B achieves by far the lowest monetary cost. LLaMA-3.3-70B uses the fewest API requests and the lowest total tokens on average, but at higher runtime, whereas Qwen3-32B and GPT-OSS-120B have higher token usage, runtime, and cost.

\begin{table*}[ht]
    \centering
    \caption{A comparison of operational costs for various language models. Results show the mean ± standard error of 10 independent seeds on the VehicleSafety benchmark. The best performance (lowest value) for each metric is marked in \textbf{bold}.}
    \label{tab:model_operational_costs}
    \resizebox{\textwidth}{!}{%
    \begin{tabular}{@{}llllllll@{}}
        \toprule
         Model & \textbf{Gemini-2.0-Flash} & \textbf{LLaMA-3.3-70B} & \textbf{LLaMA-3.1-8B} & \textbf{Qwen3-32B} & \textbf{Gemma-2-9B} & \textbf{GPT-4o-mini} & \textbf{GPT-OSS-120B} \\
        \midrule
        \multicolumn{8}{l}{\textit{Average Per-Trial}} \\
        Time (s) & \textbf{20.59 ± 1.64} & 70.01 ± 5.42 & 117.24 ± 24.74 & 224.12 ± 30.08 & 37.04 ± 6.37 & 58.53 ± 5.03 & 250.28 ± 51.33 \\
        Prompt Tokens & \textbf{44387 ± 30286} & 56792 ± 8656 & 114776 ± 7250 & 67263 ± 7898 & 91503 ± 15878 & 58357 ± 4037 & 92596 ± 25016 \\
        Completion Tokens & \textbf{2923 ± 1872} & 3435 ± 103 & 7651 ± 1256 & 70826 ± 8140 & 4497 ± 514 & 3581 ± 111 & 63615 ± 11490 \\
        Total Tokens & \textbf{47310 ± 32139} & 60227 ± 8756 & 122427 ± 8008 & 138089 ± 15868 & 96000 ± 16387 & 61938 ± 4119 & 156212 ± 34091 \\
        API Requests & 21 ± 3.5 & \textbf{20 ± 2.2} & 39 ± 2.2 & 22 ± 2.7 & 28 ± 4.4 & 21 ± 1.2 & 32 ± 8.9 \\
        Cost ($\times 10^{-3} \$)$ & 7.918 ± 0.760 & 8.757 ± 1.165 & \textbf{2.755 ± 0.197} & 55.948 ± 6.413 & 3.150 ± 0.522 & 10.902 ± 0.656 & 52.059 ± 9.900 \\
        \midrule
        \multicolumn{8}{l}{\textit{Average Total}} \\
        Tokens & 933693 ± 223702 & \textbf{722723 ± 105074} & 1469130 ± 96102 & 1657063 ± 190416 & 1151996 ± 196649 & 743253 ± 49430 & 1874542 ± 409092 \\
        Cost (\$) & 0.10930 ± 0.02439 & 0.10508 ± 0.01398 & \textbf{0.03306 ± 0.00236} & 0.67138 ± 0.07696 & 0.03780 ± 0.00627 & 0.13082 ± 0.00787 & 0.62470 ± 0.11879 \\
        Time (s) & \textbf{300.10 ± 25.13} & 857.72 ± 79.88 & 1444.58 ± 330.08 & 2689.42 ± 360.97 & 444.51 ± 76.49 & 702.42 ± 60.34 & 3003.39 ± 615.91 \\
        \bottomrule
    \end{tabular}
    } 
\end{table*}

\section{Robustness and Instruction Adherence}
\label{sec:failure_analysis}

In this section, we analyze the instruction-following abilities of the LLM when generating a candidate by quantifying the duplicate (identical to another solution generated within the same batch), reobservation (identical to a point already evaluated in a previous iteration) and out-of-region rate of its candidate generation process. In the experiments, whenever the LLM samples a out-of-region point, we rerun the generative process until it samples a candidate within the region.

\begin{table*}[ht]
\centering
\caption{Analysis of model efficiency and rejection rates on the VehicleSafety benchmark (mean over 10 seeds). \textbf{LLaMA-3.1-8B} exhibits the highest failure rate, rejecting over half of all generated proposals.}
\label{tab:rejection_rate_analysis}
\resizebox{\textwidth}{!}{%
\begin{tabular}{@{}lccccccc@{}}
\toprule
\textbf{Model} 
& \textbf{Gemini-2.0-Flash} 
& \textbf{LLaMA-3.3-70B} 
& \textbf{LLaMA-3.1-8B} 
& \textbf{Qwen3-32B} 
& \textbf{Gemma-2-9B} 
& \textbf{GPT-4o-mini} 
& \textbf{GPT-OSS-120B} \\
\midrule
\textbf{Avg. Total Proposed} 
& \textbf{554.1} 
& 556.7 
& 1213.3 
& 599.0 
& 743.4 
& 577.8 
& 798.7 \\
\midrule
\multicolumn{8}{l}{\textit{Rejection Rate Breakdown (\%)}} \\
Duplicate 
& 0.08 
& \textbf{0.00} 
& 0.89 
& \textbf{0.00} 
& \textbf{0.00} 
& \textbf{0.00} 
& \textbf{0.00} \\
Re-observed 
& 1.01 
& 0.12 
& 1.49 
& \textbf{0.00} 
& 0.01 
& \textbf{0.00} 
& 1.49 \\
Out of Region 
& \textbf{4.85} 
& 6.09 
& 49.28 
& 11.69 
& 29.50 
& 8.08 
& 27.18 \\
\midrule
\textbf{Total Rejection Rate} 
& \textbf{5.93} 
& 6.21 
& 51.66 
& 11.69 
& 29.52 
& 8.08 
& 28.67 \\
\bottomrule
\end{tabular}
}
\end{table*}

In Table~\ref{tab:rejection_rate_analysis}, we analyze the rejection rates of MOHOLLM with different LLM models on the VehicleSafety benchmark. Results indicate that LLMs sometimes struggle to respect hard numerical boundaries, a difficulty that is amplified at later iterations when the partitions become increasingly narrow.
Within the LLaMA family, \textit{LLaMA-3.1-8B} fails on over 50\% of proposals, whereas the \textit{70B} variant shows only 6\% out-of-region rate. \textit{GPT-Oss 120B} suffers a rejection rate of nearly 30\%, significantly higher than the smaller LLaMA model. On the other hand, the smaller \textit{Gemma-2-9B} demonstrates superior constraint adherence compared to the 8B LLaMA model, highlighting that architectural choices and training data composition are as important as parameter count for numerical reasoning tasks.


\section{Additional Experimental Results}
\label{secapp:experiments}

In this section, we present additional experimental results illustrating the performance of MOHOLLM across both synthetic and real-world benchmark problems, more analysis results and ablations.


\begin{figure*}[ht]
    \centering
    \includegraphics[width=0.99\linewidth]{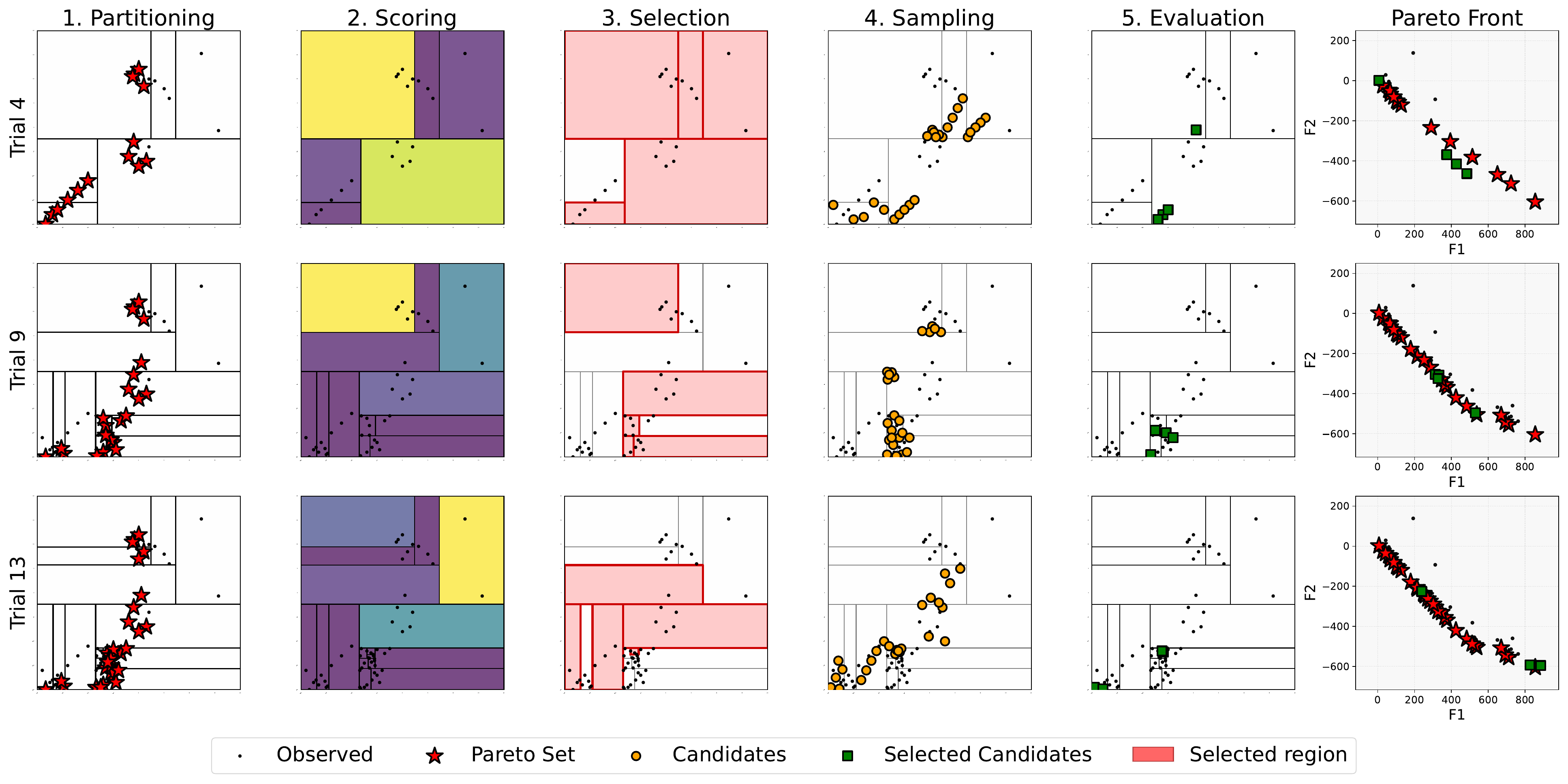}
    \caption{Search process of MOHOLLM on the 2D Chankong-Haimes function at different optimization stages. The last column illustrates the function values, while other columns the five algorithmic steps: partitioning, region scoring, probabilistic selection, LLM-based sampling, and evaluation. Rows correspond to early, intermediate, and late trials. MOHOLLM transitions from coarse, globally exploratory partitions to fine-grained, localized refinement around the Pareto front, allocating samples to both sparse regions and dense Pareto-optimal areas.}
    \label{fig:search_process_chankong}
\end{figure*}

\begin{figure*}[ht]
    \centering
    \includegraphics[width=0.99\linewidth]{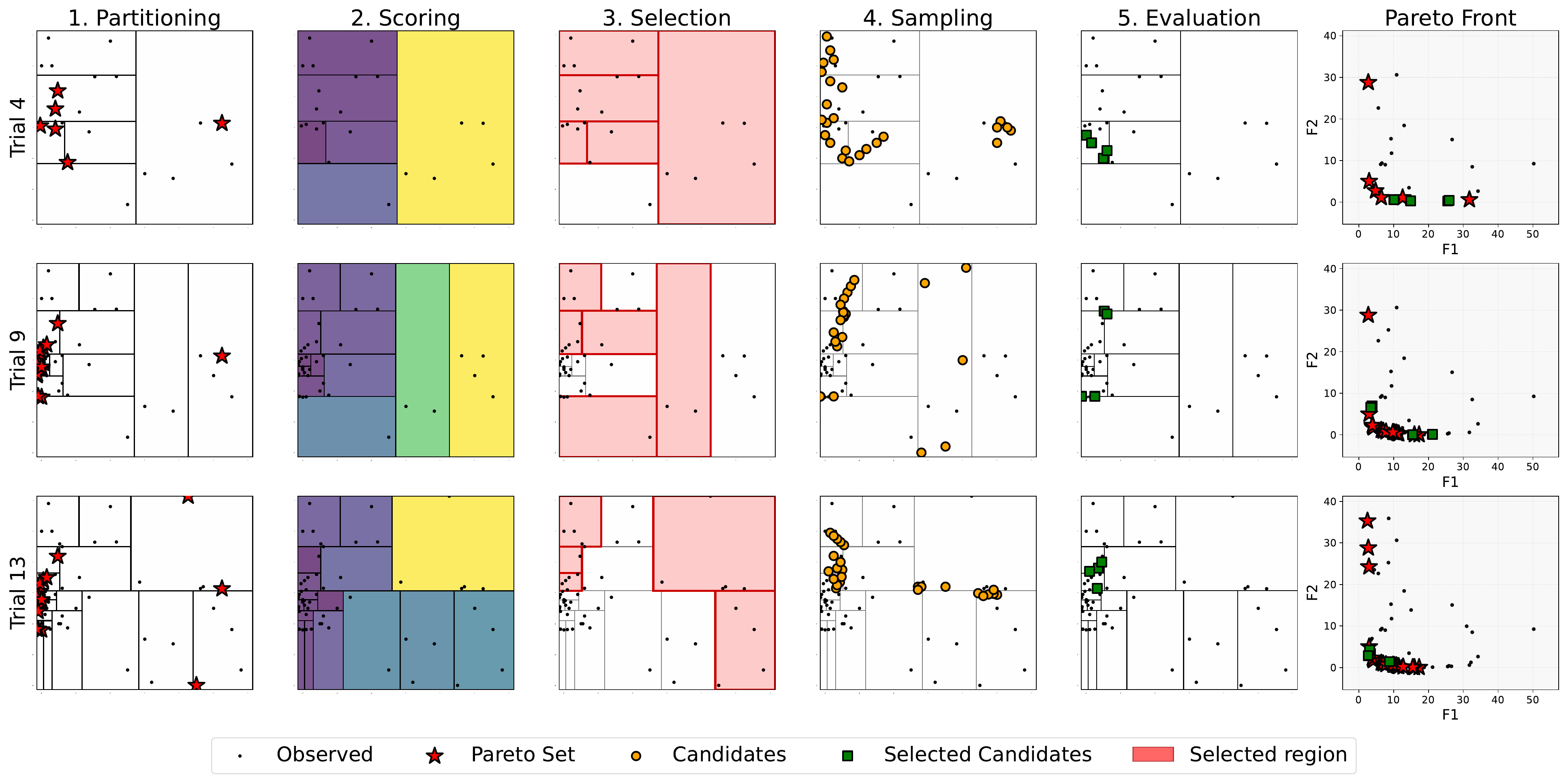}
    \caption{Search process of MOHOLLM on the 2D Poloni function at different optimization stages. The last column illustrates the function values, while other columns the five algorithmic steps: partitioning, region scoring, probabilistic selection, LLM-based sampling, and evaluation. Rows correspond to early, intermediate, and late trials. MOHOLLM transitions from coarse, globally exploratory partitions to fine-grained, localized refinement around the Pareto front, allocating samples to both sparse regions and dense Pareto-optimal areas.}
    \label{fig:search_process_poloni}
\end{figure*}


\begin{figure*}[ht]
    \centering
    \includegraphics[width=0.99\linewidth]{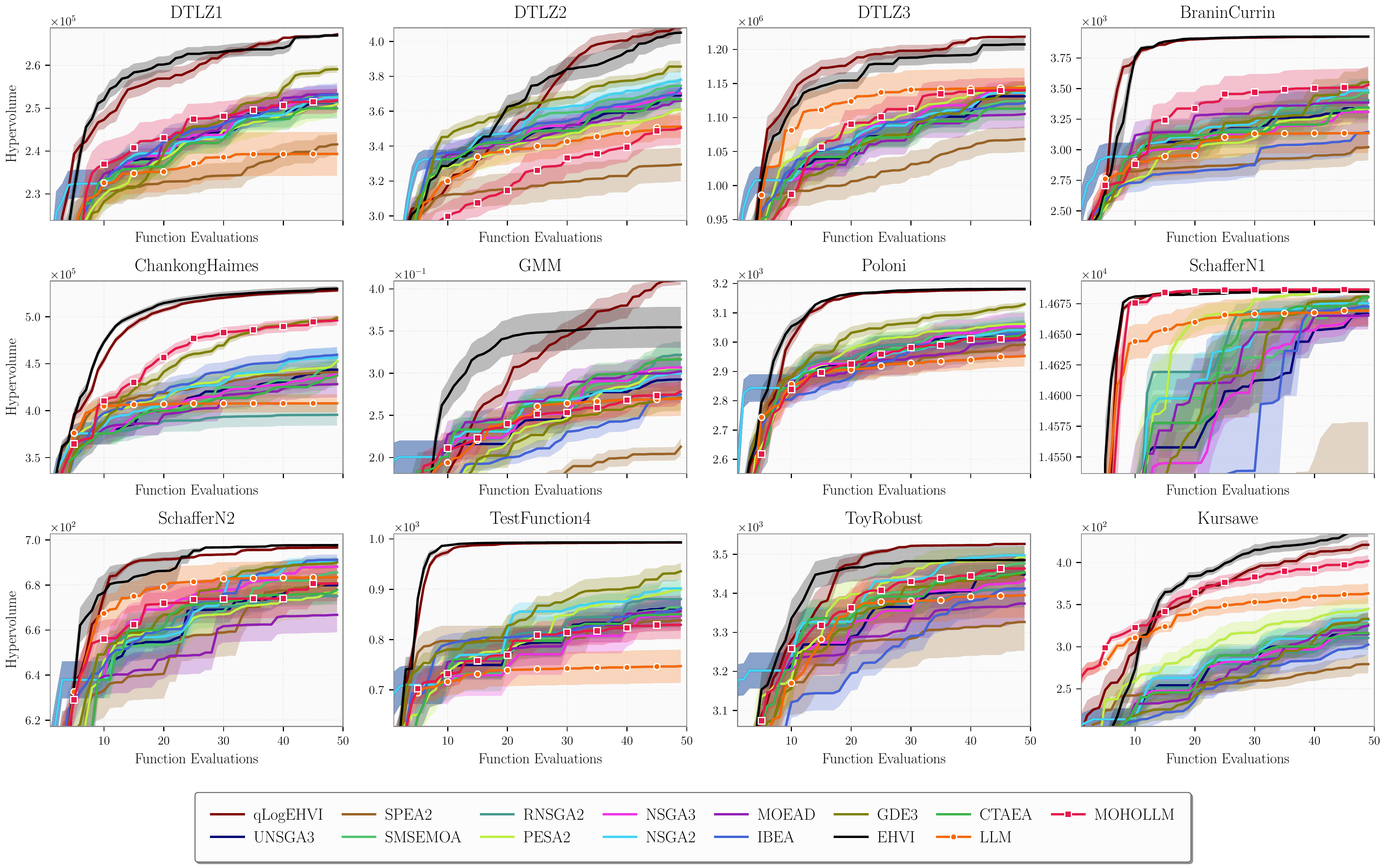}
    \caption{Hypervolume (HV) trajectory over function evaluations on the synthetic benchmark problems. Curves report the mean HV across 10 runs, and shaded regions indicating the $95\%$ CI. MOHOLLM shows competitive performance across all tasks, in contrast to the unconstrained LLM baseline, which rapidly stagnates.}
    \label{fig:hv_synth_all}
\end{figure*}

\begin{figure*}[ht]
    \centering
    \includegraphics[width=0.99\linewidth]{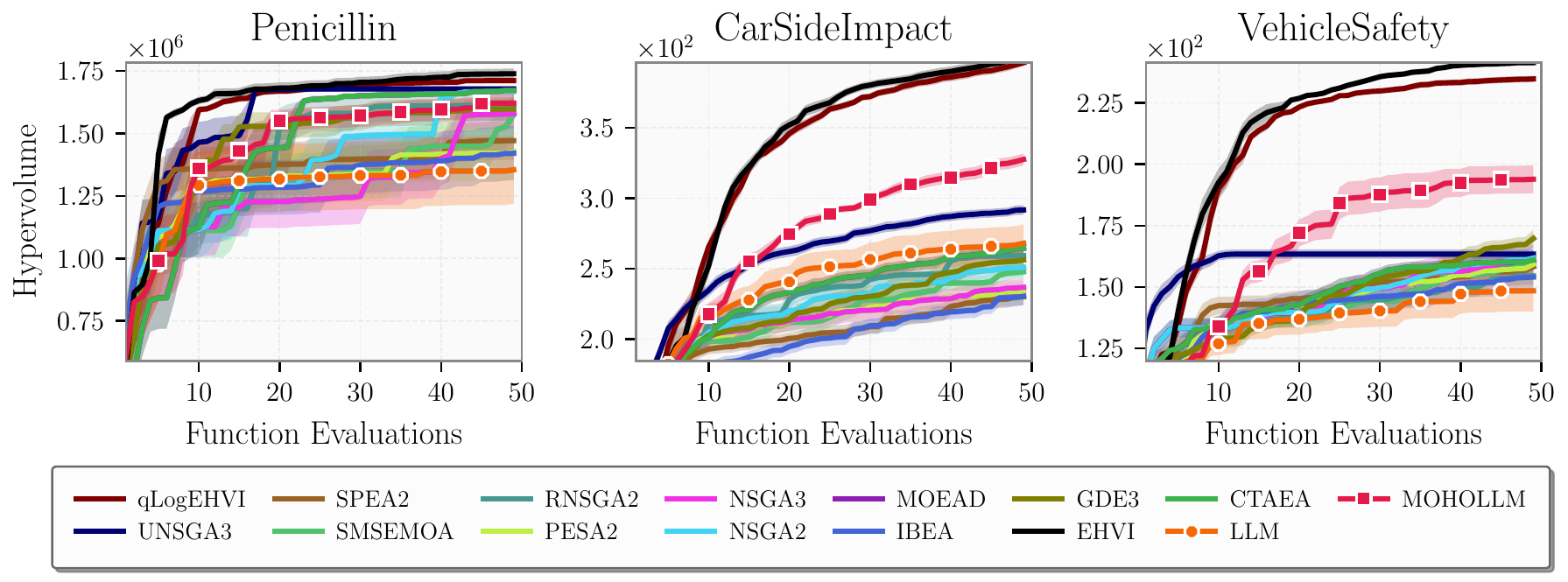}
    \caption{Hypervolume (HV) trajectory over function evaluations on real-world benchmarks (Penicillin, VehicleSafety, and CarSideImpact). Curves report the mean HV across 10 runs, and shaded regions indicating the $95\%$ CI. MOHOLLM shows competitive performance across all tasks, in contrast to the unconstrained LLM baseline, which rapidly stagnates..}
    \label{fig:hv_real_all}
\end{figure*}


\begin{figure*}[ht]
    \centering
    \includegraphics[width=0.93\linewidth]{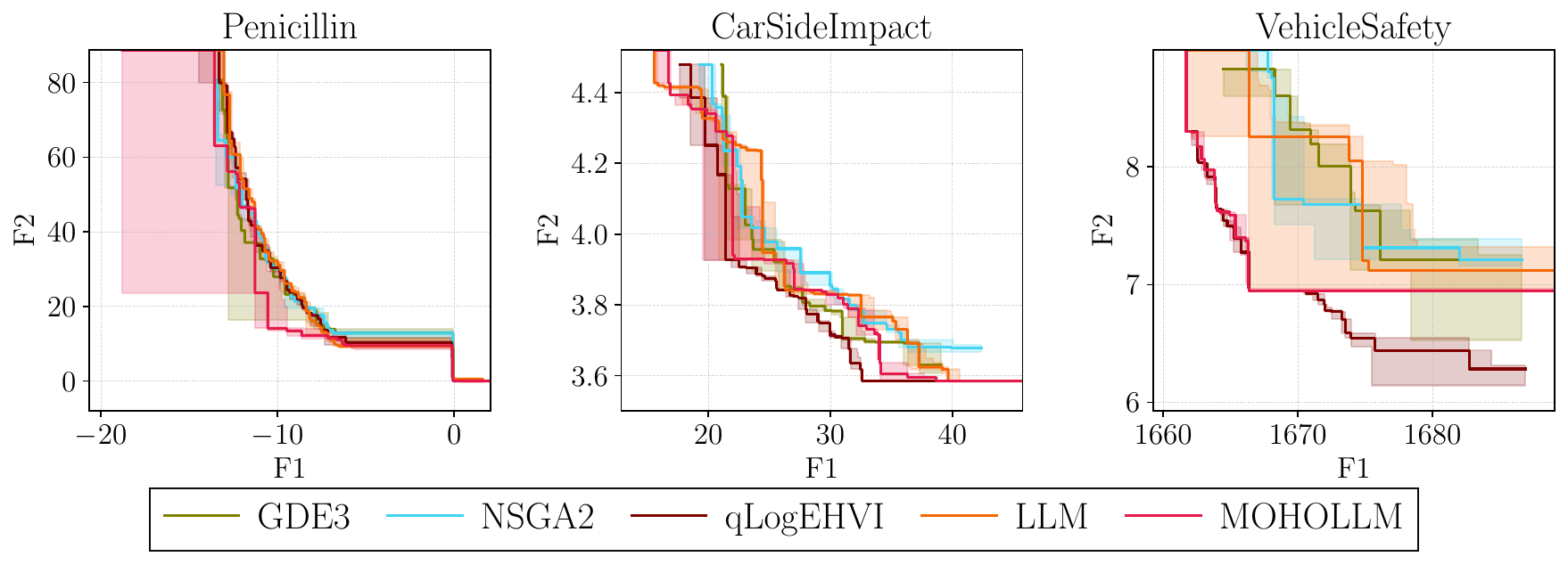}
    \caption{Empirical Attainment Function (EAF) on the real-world benchmarks (Penicillin, CarSideImpact, and VehicleSafety). For each problem, the plots show the EAF aggregated over 10 independent runs, comparing MOHOLLM against representative baselines (GDE3, NSGA-II, qLogEHVI, and the global LLM). Shaded regions indicate the variance across runs.}
    \label{fig:eaf}
\end{figure*}


\begin{figure*}[ht]
    \centering
    \includegraphics[width=0.99\linewidth]{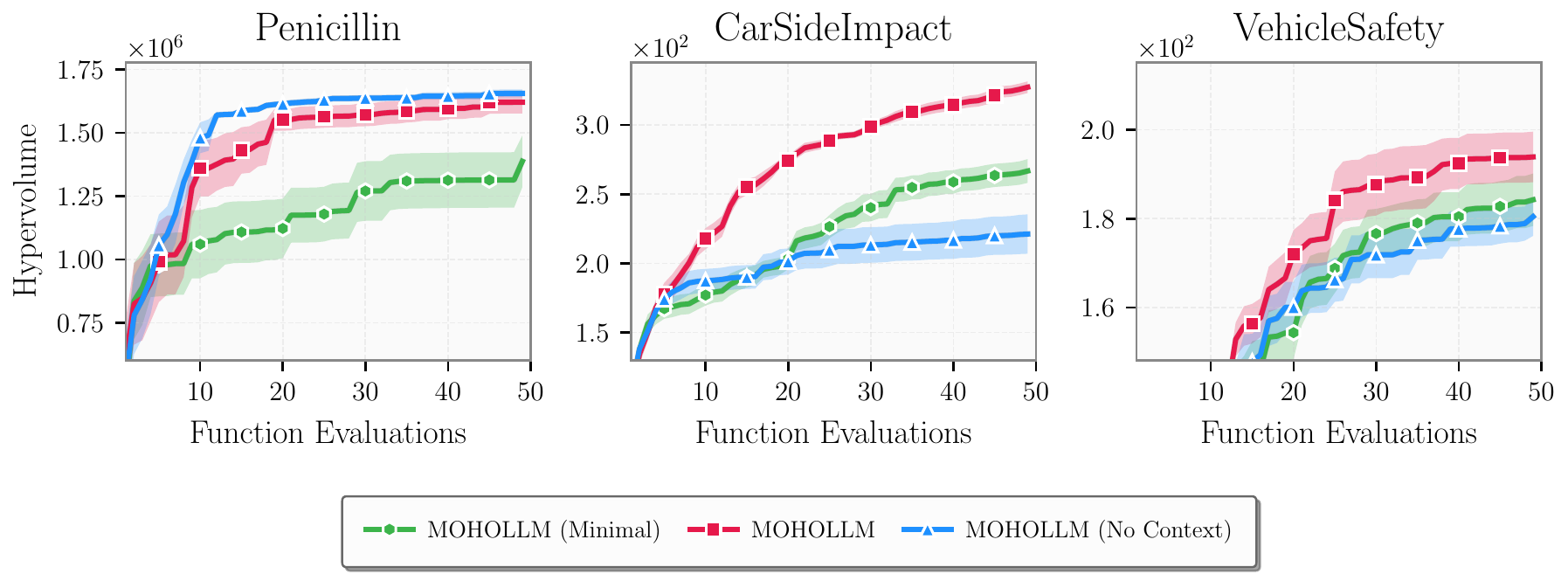}
    \caption{Prompt ablations on real-world benchmarks (Penicillin, VehicleSafety, and CarSideImpact). We compare the full MOHOLLM configuration against variants obtained by removing or modifying prompt components.}
    \label{fig:hv_real_prompt_ablation_all}
\end{figure*}

\begin{figure*}[ht]
    \centering
    \includegraphics[width=0.99\linewidth]{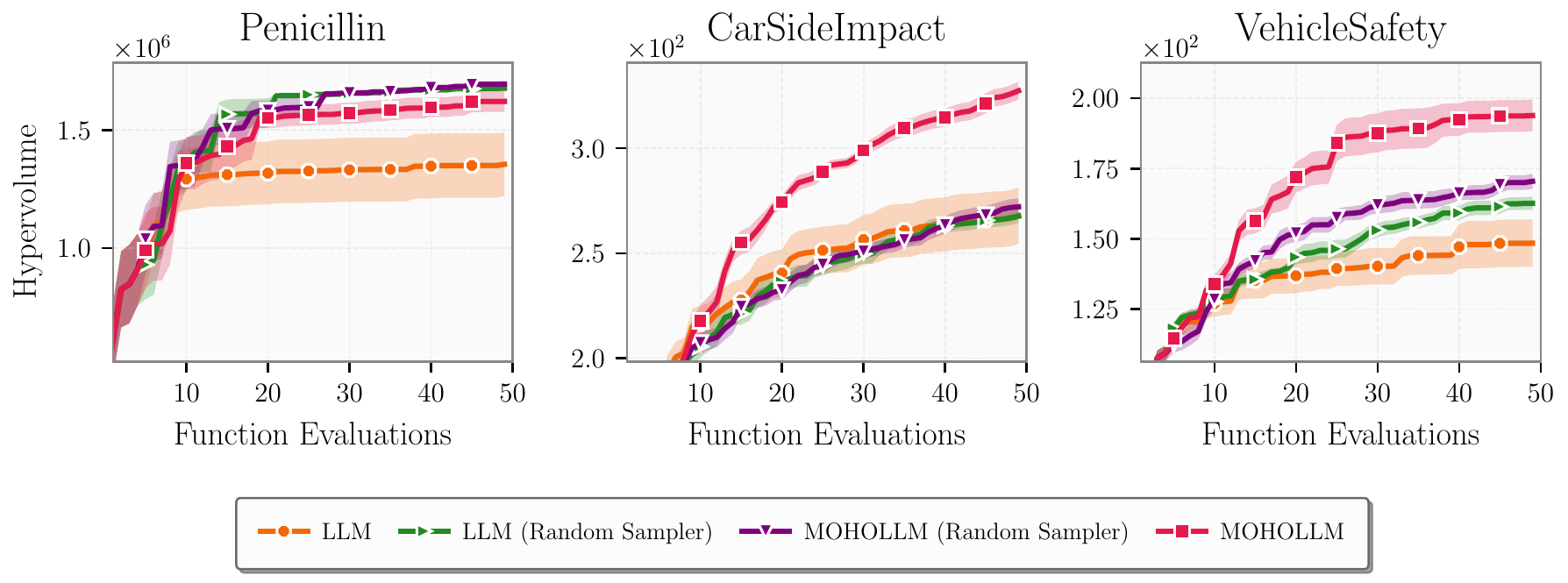}
    \caption{Candidate sampler ablation comparing MOHOLLM against variants with a random candidate sampler.}
    \label{fig:sampler_ablation}
\end{figure*}

\begin{figure*}[ht]
    \centering
    \includegraphics[width=0.99\linewidth]{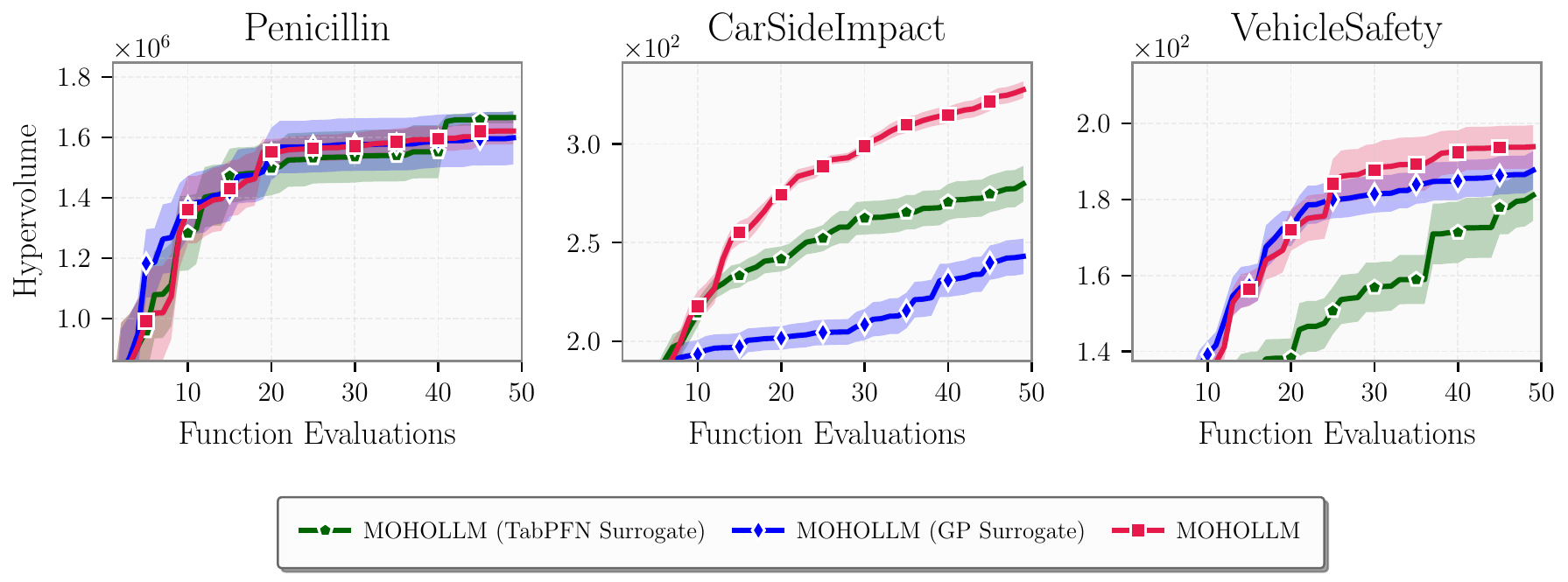}
    \caption{Surrogate model ablation on the real-world benchmarks (Penicillin, CarSideImpact, and VehicleSafety), comparing MOHOLLM with different surrogate choices (TabPFN surrogate and GP surrogate) against the default LLM surrogate. We report the mean hypervolume trajectories over 10 runs with $95\%$ confidence intervals.}
    \label{fig:surr_ablation}
\end{figure*}

\begin{figure*}[ht]
    \centering
    \includegraphics[width=0.99\linewidth]{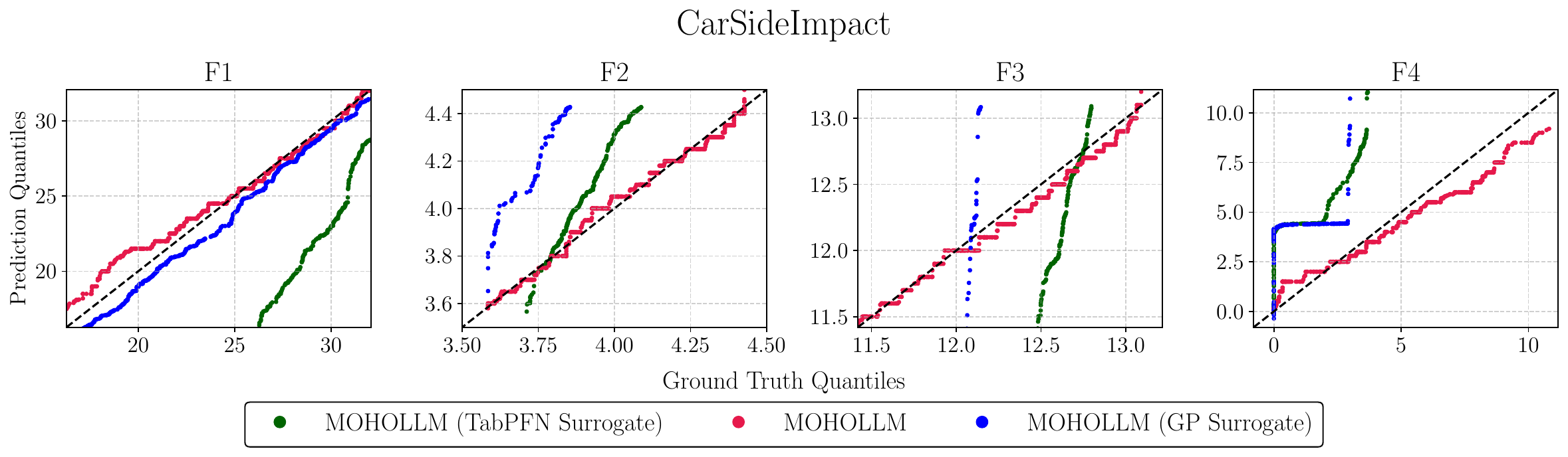}
    \caption{Q-Q plots on CarSideImpact comparing surrogate predictions against ground-truth quantiles for each objective (F1–F5), illustrating calibration and distributional alignment of the TabPFN, GP, and default LLM surrogates.}
    \label{fig:qq-carsideimpact}
\end{figure*}

\begin{figure*}[ht]
    \centering
    \includegraphics[width=0.99\linewidth]{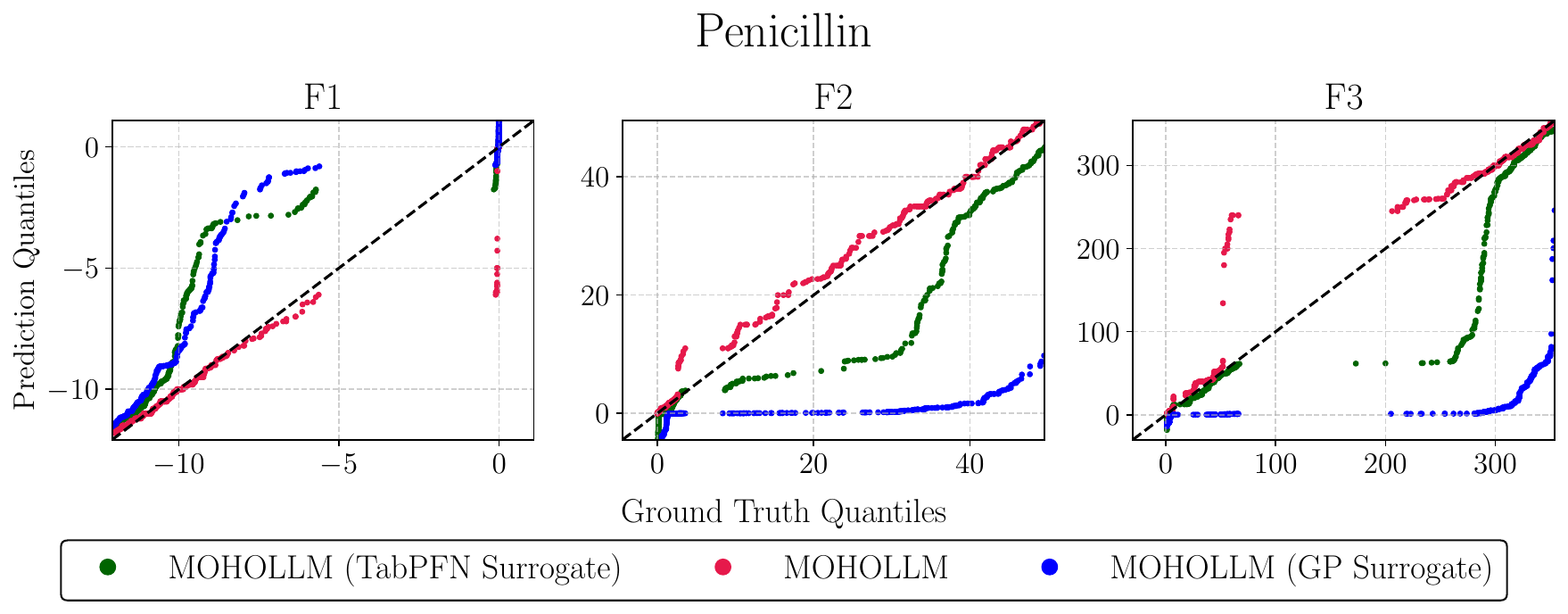}
    \caption{Q-Q plots on Penicillin analogue to Figure~\ref{fig:qq-carsideimpact}.}
    \label{fig:qq-penicillin}
\end{figure*}

\begin{figure*}[ht]
    \centering
    \includegraphics[width=0.99\linewidth]{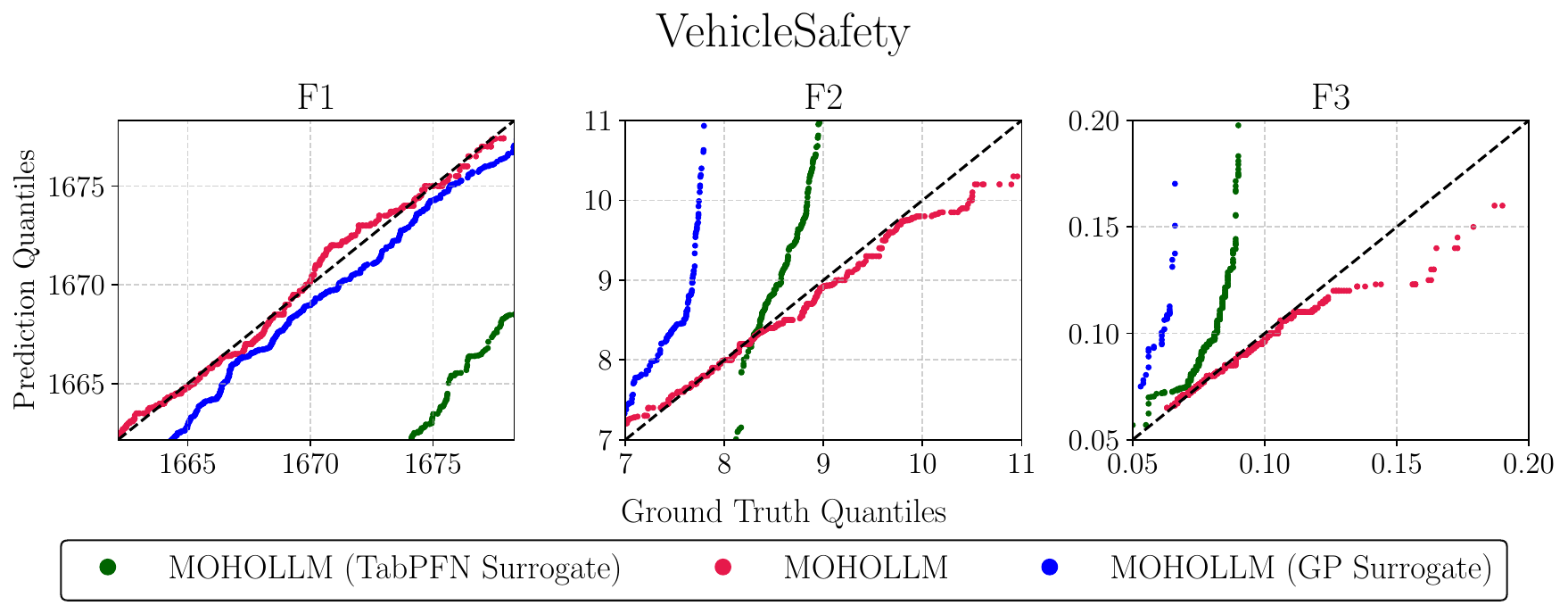}
    \caption{Q-Q plots on VehicleSafety analogue to Figure~\ref{fig:qq-carsideimpact}.}
    \label{fig:qq-vehiclesafety}
\end{figure*}


\begin{figure*}[ht]
    \centering
    \includegraphics[width=0.99\linewidth]{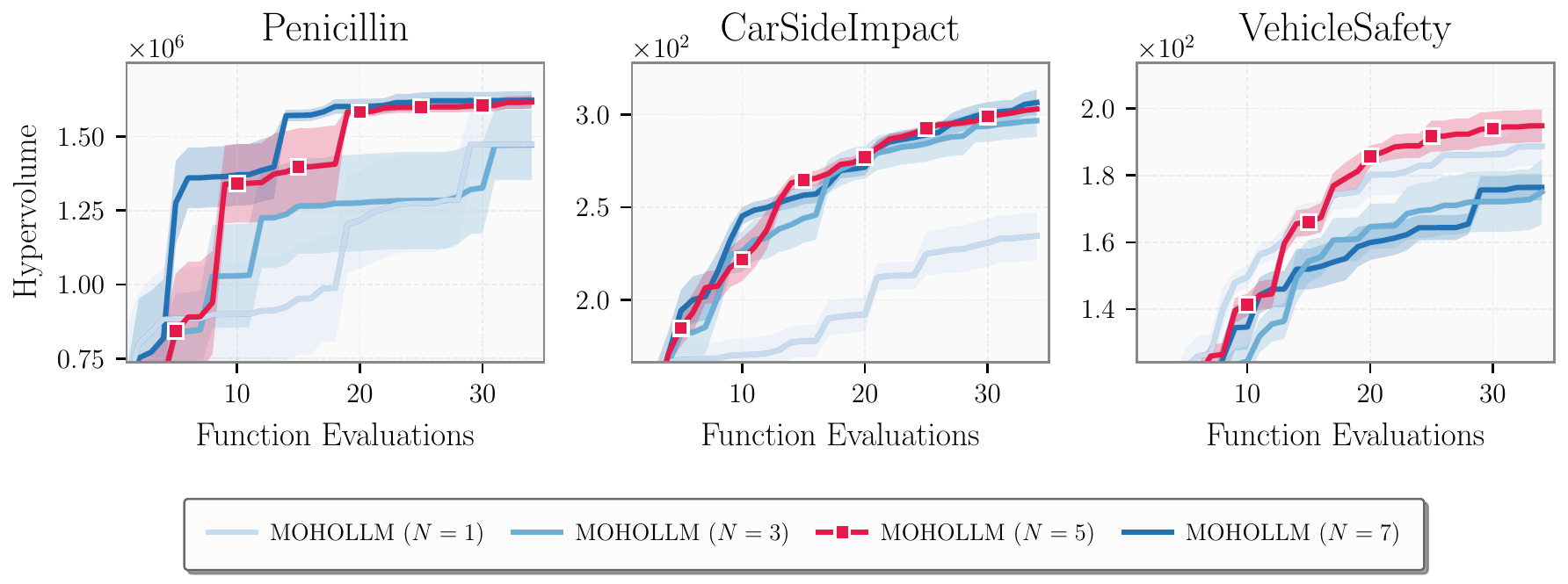}
    \caption{The effect of varying the number of candidates $N$ generated from the LLM per iteration on the real-world benchmarks. Higher $N$ typically improves convergence speed and final solution quality up to a saturation point.}
    \label{fig:ablation_k}
\end{figure*}

\begin{figure*}[ht]
    \centering
    \includegraphics[width=0.99\linewidth]{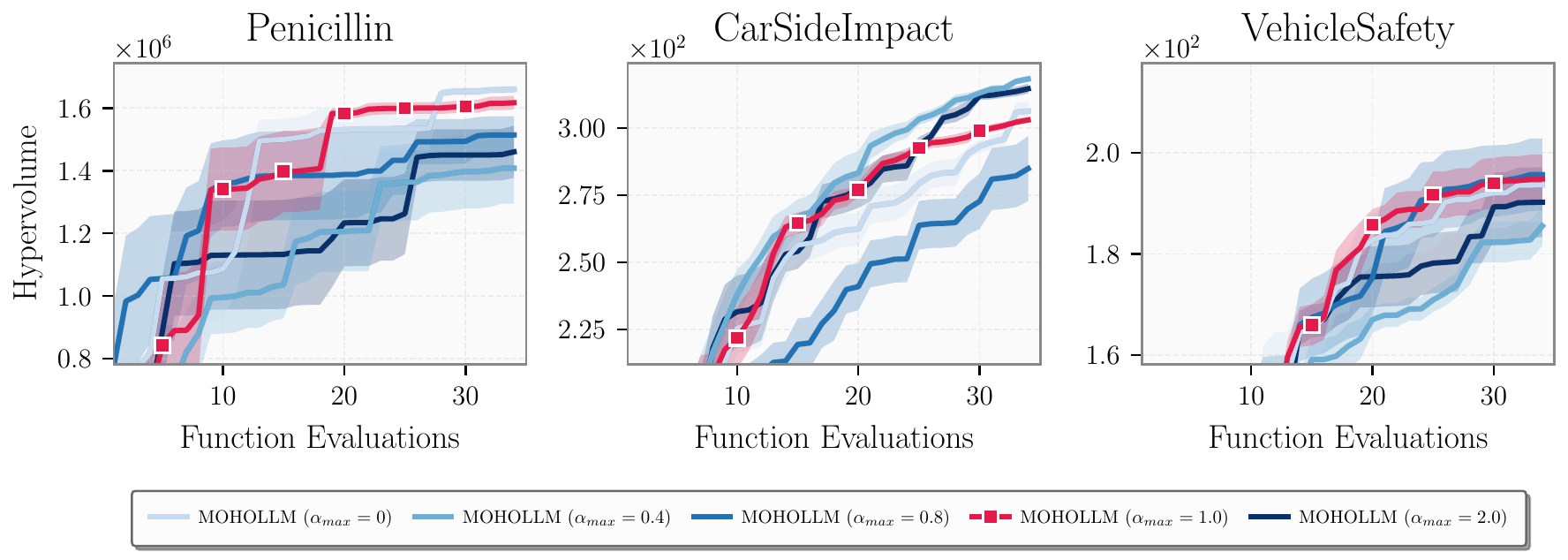}
    \caption{The effect of varying $\alpha_{max}$ on the real-world benchmarks.}
    \label{fig:ablation_alpha}
\end{figure*}

\begin{table}[ht]
\centering
\caption{Per-objective surrogate prediction accuracy on the CarSideImpact benchmark, reporting $R^2$, MSE, Kendall’s $\tau$, and Spearman’s $\rho$ over $N=700$ test points for each method and objective.}
\label{tab:surrogate_metrics_carside}
\resizebox{\linewidth}{!}{%
\begin{tabular}{llrrrrr}
\toprule
\textbf{Method} & \textbf{Obj} & $\boldsymbol{R^2}\,\uparrow$ & \textbf{MSE}$\,\downarrow$ & $\boldsymbol{\tau}\,\uparrow$ & $\boldsymbol{\rho}\,\uparrow$ & \textbf{N} \\
\midrule
\multirow{4}{*}{Gemini 2.0 Flash} 
 & F1 & \textbf{0.877} & \textbf{5.53} & \textbf{0.786} & \textbf{0.931} & 700 \\
 & F2 & \textbf{0.645} & \textbf{0.02} & \textbf{0.632} & \textbf{0.800} & 700 \\
 & F3 & \textbf{0.874} & \textbf{0.05} & \textbf{0.774} & \textbf{0.920} & 700 \\
 & F4 & \textbf{0.632} & \textbf{4.32} & \textbf{0.682} & \textbf{0.847} & 700 \\
\midrule
\multirow{4}{*}{GP} 
 & F1 & 0.010 & 43.35 & 0.781 & 0.802 & 700 \\
 & F2 & -6889.585 & 376.37 & -0.513 & -0.628 & 700 \\
 & F3 & -496.359 & 149.84 & -0.421 & -0.467 & 700 \\
 & F4 & -21.427 & 281.44 & -0.129 & -0.155 & 700 \\
\midrule
\multirow{4}{*}{TabPFN-v2} 
 & F1 & -5.616 & 242.86 & 0.061 & 0.037 & 700 \\
 & F2 & -3340.140 & 196.87 & 0.127 & 0.140 & 700 \\
 & F3 & -287.800 & 88.74 & 0.116 & 0.140 & 700 \\
 & F4 & -15.547 & 204.81 & 0.107 & 0.154 & 700 \\
\bottomrule
\end{tabular}%
}
\end{table}

\begin{table}[ht]
\centering
\caption{Per-objective surrogate prediction accuracy on the Penicillin benchmark, reporting $R^2$, MSE, Kendall’s $\tau$, and Spearman’s $\rho$ over $N=700$ test points for each method and objective.}
\label{tab:surrogate_metrics_penicillin}
\resizebox{\linewidth}{!}{%
\begin{tabular}{llrrrrr}
\toprule
\textbf{Method} & \textbf{Obj} & $\boldsymbol{R^2}$ & \textbf{MSE} & $\boldsymbol{\tau}$ & $\boldsymbol{\rho}$ & \textbf{N} \\
\midrule
\multirow{3}{*}{Gemini 2.0 Flash} 
 & F1 & \textbf{0.742} & \textbf{8.06} & \textbf{0.684} & \textbf{0.851} & 700 \\
 & F2 & \textbf{0.792} & \textbf{122.67} & \textbf{0.797} & \textbf{0.896} & 700 \\
 & F3 & \textbf{0.787} & \textbf{6031.37} & \textbf{0.824} & \textbf{0.909} & 700 \\
\midrule
\multirow{3}{*}{GP} 
 & F1 & -9.721 & 314.59 & 0.274 & 0.371 & 700 \\
 & F2 & -0.769 & 1039.85 & -0.210 & -0.295 & 700 \\
 & F3 & -0.397 & 39389.47 & 0.028 & 0.065 & 700 \\
\midrule
\multirow{3}{*}{TabPFN-v2} 
 & F1 & -173.697 & 5184.01 & 0.375 & 0.487 & 700 \\
 & F2 & -0.537 & 987.24 & 0.392 & 0.475 & 700 \\
 & F3 & 0.491 & 11916.89 & 0.573 & 0.720 & 700 \\
\bottomrule
\end{tabular}%
}
\end{table}

\begin{table}[ht]
\centering
\caption{Per-objective surrogate prediction accuracy on the VehicleSafety benchmark, reporting $R^2$, MSE, Kendall’s $\tau$, and Spearman’s $\rho$ over $N=700$ test points for each method and objective.}
\label{tab:surrogate_metrics_vehicle}
\resizebox{\linewidth}{!}{%
\begin{tabular}{llrrrrr}
\toprule
\textbf{Method} & \textbf{Obj} & $\boldsymbol{R^2}$ & \textbf{MSE} & $\boldsymbol{\tau}$ & $\boldsymbol{\rho}$ & \textbf{N} \\
\midrule
\multirow{3}{*}{Gemini 2.0 Flash} 
 & F1 & \textbf{0.806} & \textbf{7.57} & \textbf{0.758} & \textbf{0.903} & 700 \\
 & F2 & \textbf{0.655} & \textbf{0.25} & \textbf{0.672} & \textbf{0.805} & 700 \\
 & F3 & \textbf{0.524} & \textbf{0.00} & \textbf{0.655} & \textbf{0.813} & 700 \\
\midrule
\multirow{3}{*}{GP} 
 & F1 & -9038.058 & 402521.05 & 0.709 & 0.719 & 700 \\
 & F2 & -2671946.160 & 2216819.87 & 0.257 & 0.345 & 700 \\
 & F3 & -1789516768.903 & 1952829.18 & 0.056 & 0.069 & 700 \\
\midrule
\multirow{3}{*}{TabPFN-v2} 
 & F1 & -16837.928 & 1272547.79 & 0.286 & 0.306 & 700 \\
 & F2 & -1349159.150 & 1365562.21 & 0.221 & 0.274 & 700 \\
 & F3 & -1273121958.222 & 1111401.55 & 0.190 & 0.234 & 700 \\
\bottomrule
\end{tabular}%
}
\end{table}

\clearpage

\begin{figure*}[ht]
    \centering
    \includegraphics[width=0.99\linewidth]{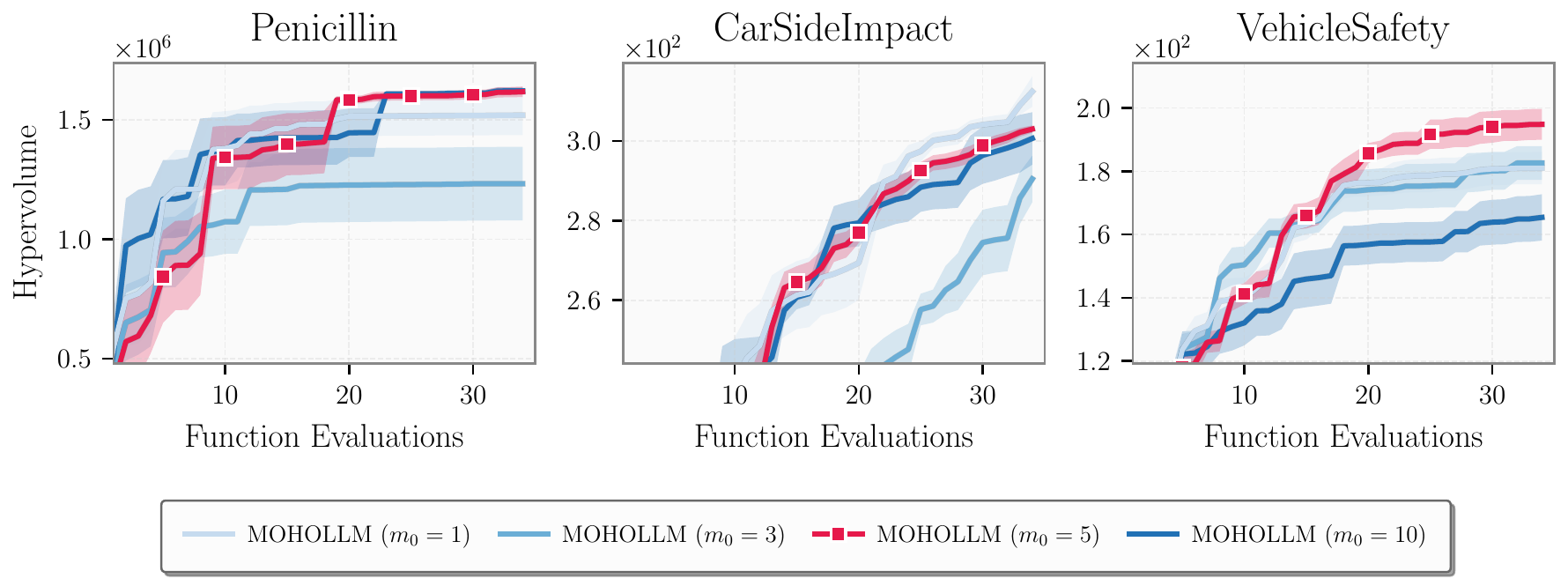}
    \caption{The effect of varying the initial maximum leaf size $m_0$ on the real-world benchmarks. $m_0 = 5$ is a robust choice.}
    \label{fig:ablation_m}
\end{figure*}

\begin{figure*}[ht]
    \centering
    \includegraphics[width=0.99\linewidth]{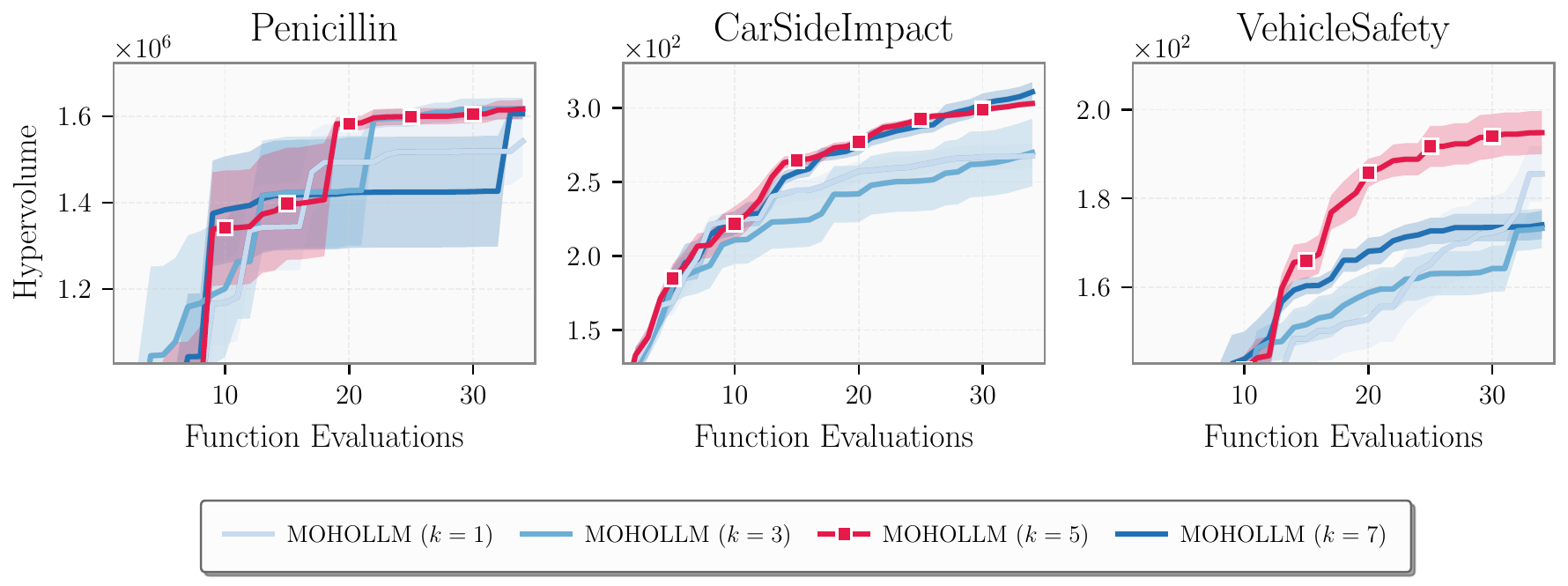}
    \caption{The effect of varying the number of sampled regions $k$ on the real-world benchmarks.}
    \label{fig:ablation_M}
\end{figure*}

\begin{table*}[ht]
\centering
\caption{Definition of the dynamic prompt variables used in MOHOLLM.}
\label{tab:placeholders}
\small
\resizebox{\textwidth}{!}{%
\begin{tabular}{@{}l p{5cm} p{6cm}@{}}
\toprule
\textbf{Placeholder} & \textbf{Description} & \textbf{Example Content} \\
\midrule
\texttt{\$metrics} & The specific objective functions to be optimized. & \texttt{F1 (lower is better), F2 (lower is better)} \\
\midrule
\texttt{\$region\_constraints} & The hard bounds of the current hyperrectangular partition (leaf node). & 
\texttt{\{lr: range(float([0.0, 0.9])), \newline num\_layer: range(int([1, 20]))\}} \\
\midrule
\texttt{\$region\_ICL\_examples} & In-Context Learning examples sampled from the history $\mathcal{D}_t$ to guide generation. & 
\texttt{\{lr: 0.4, num\_layer: 8\} \newline F1: 5.65} \\
\midrule
\texttt{\$target\_number\_of\_candidates} & The number of candidates ($M$) requested from the sampler. & \texttt{15} \\
\midrule
\texttt{\$candidate\_sampler\_response\_format} & The required JSON schema for output parsing. & \texttt{\{lr: float, num\_layer: int\}} \\
\midrule
\texttt{\$target\_architectures} & The set of newly generated candidates requiring surrogate scoring. & 
\texttt{1: \{lr: 0.4, ...\} \newline 2: \{lr: 0.03, ...\}} \\
\midrule
\texttt{\$surrogate\_model\_response\_format} & The required JSON schema for predicted values. & \texttt{\{F1: float, F2: float\}} \\
\bottomrule
\end{tabular}
}
\end{table*}

\clearpage
\newpage
\section{Prompt Engineering and Templates}
\label{appendix:prompt_design}

\subsection{Generic Prompt Templates}
To enable the LLM to act as both a candidate sampler and a surrogate model, we employ structured prompt templates. These templates are dynamically populated at each iteration with task-specific constraints, optimization history, and performance metrics. We visually distinguish the templates by function: \textcolor{blue}{blue} for candidate sampling and \textcolor{green}{green} for surrogate modeling. Listing~\ref{lst:prompt_template_candidate_sampler} and \ref{lst:prompt_template_surrogate_model} present the main templates.

\begin{lstlisting}[style=sampler, caption={Generic prompt template for the candidate sampler.}, label={lst:prompt_template_candidate_sampler}]
# Optimization task

## Problem Description
You are tasked with solving a optimization problem that requires finding optimal solutions.

- **Evaluation**: Configurations are measured by $metrics

$description

## Constraints
The allowable ranges for the hyperparameters are:
$input_boundaries

## Previously Evaluated configurations
Below are examples of configurations that have been evaluated, showing their operations and performance metrics:

$ICL_examples

## Your Task
Generate $target_number_of_candidates new configurations that:
1. Are likely to achieve lower $metrics than the examples
2. Are different from all previously evaluated configurations
3. Satisfy all the specified constraints: $input_boundaries

## Output Format
Each configuration has to follow this format:

$candidate_sampler_response_format

Provide your response in a JSON list containing each proposed configuration.
Return only the required JSON list output without additional text.
\end{lstlisting}

\begin{lstlisting}[style=surrogate, caption={Generic prompt template for the surrogate Mmdel.}, label={lst:prompt_template_surrogate_model}]
# Configuration Performance Prediction

## Problem Description
You are tasked with predicting the performance of configurations.

- **Evaluation Metric**: $metrics (to be predicted)
- **Constraint**: The allowable ranges for the hyperparameter are: $input_boundaries

$description

## Reference configurations with Known Performance
Below are examples of configurations that have been evaluated, showing their operations and performance metrics:

$ICL_examples

## Candidate configurations to Evaluate
You must predict performance for these new configurations:

$target_architectures

## Your Task
1. Predict the $metrics value for each candidate configuration
2. Base your predictions on patterns in the reference examples

## Output Format
Each evaluation has to follow this format:

$surrogate_model_response_format

Provide your response in a JSON list containing each proposed evaluation.
Return only the required JSON list output without additional text.
\end{lstlisting}

\subsection{Prompt Variable Definitions}
\label{appendix:prompt_template_tag_description}

Table~\ref{tab:placeholders} details the dynamic placeholders used within the templates. These tags are programmatically replaced at runtime based on the current leaf regions' boundaries and the optimization history.

\subsection{Prompt Ablation Studies}
\label{appendix:prompt_ablation}
The following listings shows the \texttt{Minimal} templates used to isolate the impact of prompt complexity.

\begin{lstlisting}[style=sampler, caption={Minimal candidate sampler prompt template (ablation).}, label={lst:ablation_sampler}]
Previously Evaluated Configurations:
$Region_ICL_examples

Generate $target_number_of_candidates new configurations that:
1. Are likely to achieve lower $metrics than the examples
2. Are different from all previously evaluated configurations
3. Satisfy all the specified constraints: $region_constraints

Format:
Each configuration has to follow this format:

$candidate_sampler_response_format

Provide your response in a JSON list containing each proposed configuration.
Return only the required JSON list output without additional text.
\end{lstlisting}

\begin{lstlisting}[style=surrogate, caption={Minimal surrogate prompt template (ablation).}, label={lst:ablation_surrogate}]
Previously Evaluated Configurations:
$Region_ICL_examples

Predict performance for these configurations:
$target_architectures

Task:
1. Predict the $metrics value for each candidate configuration
2. Base your predictions on patterns in the reference examples

Format:
Each evaluation has to follow this format:

$surrogate_model_response_format

Provide your response in a JSON list containing each proposed evaluation.
Return only the required JSON list output without additional text.
\end{lstlisting}

\subsection{Domain-Specific Contexts}
\label{appendix:additional_context}
For real-world benchmarks, we inject specific problem descriptions into the \texttt{\$description} tag to aid the LLM's semantic reasoning.

\begin{lstlisting}[style=context, caption={Context for the VehicleSafety benchmark.}, label={lst:context_vehicle_safety}]
Your task is to optimize the design of a vehicle for frontal crash safety by adjusting the material thickness of five key structural components. You will generate a configuration for: x0, the bumper beam that absorbs initial impact; x1, the crash box designed to crush progressively; x2, the main longitudinal rails that channel energy; x3, the A-pillar that protects the cabin integrity; and x4, the dash panel that prevents intrusion into the legroom area. The performance of your design will be evaluated against three competing objectives to be minimized: F1 is the total vehicle mass, F2 is the chest injury criterion, and F3 is the toe board intrusion. Your goal is to propose designs that find the best trade-off between minimizing weight, occupant injury, and structural deformation.
\end{lstlisting}

\begin{lstlisting}[style=context, caption={Context for the CarSideImpact benchmark.}, label={lst:context_car_side_impact}]
Your task is to propose optimal designs for a vehicle to improve its safety in a side-impact collision. You will generate a configuration of seven input variables representing the thickness of key structural components: x0 (B-Pillar inner), x1 (B-Pillar reinforcement), x2 (floor side inner), x3 (cross-member), x4 (door beam), x5 (door beltline reinforcement), and x6 (roof rail). The performance of your design will be judged on four objectives, all of which should be minimized: F1 is the vehicle's total weight, F2 is the injury load on the occupant's abdomen, F3 is the intrusion velocity at key points, and F4 is a penalty for any constraint violations. Your goal is to find designs that represent the best possible trade-offs across these competing safety and engineering metrics.
\end{lstlisting}

\begin{lstlisting}[style=context, caption={Context for the Penicillin benchmark.}, label={lst:context_penicillin}]
Your task is to find the optimal settings for a simulated fed-batch penicillin production process. You will generate a configuration of seven input control parameters that define the initial conditions and operation of the fermenter. These parameters are: x0, the culture medium volume; x1, the biomass concentration; x2, the operating temperature; x3, the glucose substrate concentration; x4, the substrate feed rate; x5, the substrate feed concentration; and x6, the H+ concentration (acidity). The success of your configuration is evaluated on three metrics in a multi-objective optimization context, all of which are to be minimized. F1 represents the negative final penicillin yield, F2 is the total production time, and F3 is the total CO2 emission byproduct. Your goal is to propose configurations that find the best trade-offs by minimizing all three competing objectives.
\end{lstlisting}

\subsection{Instantiated Examples (Poloni Benchmark)}
To illustrate how the templates are populated during runtime, the following listings show the exact prompts sent to the LLM for the \textit{Poloni} benchmark.

\begin{lstlisting}[style=sampler, caption={Example candidate sampler prompt used in the Poloni benchmark.}, label={lst:example_sampler_poloni}]
# Optimization task

## Problem Description
You are tasked with solving a optimization problem that requires finding optimal solutions.

- **Evaluation**: Configurations are measured by F1 (lower is better), F2 (lower is better)

## Constraints
The allowable ranges for the hyperparameters are:
{
  x: range(float([-3.142, -1.067])),
  y: range(float([0.274, 3.142]))
}

## Previously Evaluated Configurations
Below are examples of configurations that have been evaluated, showing their operations and performance metrics:

Configuration: {"x": -3.142, "y": -0.725}
F1: 13.327, F2: 0.096

Configuration: {"x": -3.142, "y": -0.706}
F1: 13.178, F2: 0.107

Configuration: {"x": -3.142, "y": -0.162}
F1: 7.839, F2: 0.722

Configuration: {"x": 0.975, "y": 1.681}
F1: 1.18, F2: 22.988

Configuration: {"x": -3.142, "y": -0.82}
F1: 14.006, F2: 0.053

Configuration: {"x": -3.142, "y": 0.494}
F1: 2.616, F2: 2.252

Configuration: {"x": -3.142, "y": 0.017}
F1: 6.028, F2: 1.054

Configuration: {"x": -3.142, "y": -0.231}
F1: 8.566, F2: 0.612

Configuration: {"x": -3.142, "y": -0.392}
F1: 10.257, F2: 0.39


## Your Task
Generate 5 new configurations that:
1. Are likely to achieve lower F1 (lower is better), F2 (lower is better) than the examples
2. Are different from all previously evaluated configurations
3. Satisfy all the specified constraints: {
  x: range(float([-3.142, -1.067])),
  y: range(float([0.274, 3.142]))
}


## Output Format
Each configuration has to follow this format:

{"x": $x, "y": $y}

Provide your response in a JSON list containing each proposed configuration.
Return only the required JSON list output without additional text.
\end{lstlisting}

\begin{lstlisting}[style=surrogate, caption={Example surrogate model prompt used in the Poloni benchmark.}, label={lst:example_surrogate_poloni}]
# Configuration Performance Prediction
## Problem Description
You are tasked with predicting the performance of configurations.

- **Evaluation Metric**: F1 (lower is better), F2 (lower is better) (to be predicted)


## Reference configurations with Known Performance
Below are examples of configurations that have been evaluated, showing their operations and performance metrics:

Configuration: {"x": -3.142, "y": -0.725}
F1: 13.327, F2: 0.096

Configuration: {"x": -3.142, "y": -0.706}
F1: 13.178, F2: 0.107

Configuration: {"x": -3.142, "y": -0.162}
F1: 7.839, F2: 0.722

Configuration: {"x": 0.975, "y": 1.681}
F1: 1.18, F2: 22.988

Configuration: {"x": -3.142, "y": -0.82}
F1: 14.006, F2: 0.053

Configuration: {"x": -3.142, "y": 0.494}
F1: 2.616, F2: 2.252

Configuration: {"x": -3.142, "y": 0.017}
F1: 6.028, F2: 1.054

Configuration: {"x": -3.142, "y": -0.231}
F1: 8.566, F2: 0.612

Configuration: {"x": -3.142, "y": -0.392}
F1: 10.257, F2: 0.39


## Candidate configurations to Evaluate
You must predict performance for these new configurations:

1: {'x': -2.742, 'y': 0.474}
2: {'x': -2.342, 'y': 0.674}
3: {'x': -2.542, 'y': 0.574}
4: {'x': -2.942, 'y': 0.374}
5: {'x': -2.142, 'y': 0.774}

## Your Task
1. Predict the F1 (lower is better), F2 (lower is better) value for each candidate configuration
2. Base your predictions on patterns in the reference examples

## Output Format
Each evaluation has to follow this format:

{"F1": $F1, "F2": $F2}

Provide your response in a JSON list containing each proposed evaluation.
Return only the required JSON list output without additional text.
\end{lstlisting}

\end{document}